\newtheorem{theorem}{Theorem}
\newtheorem{lemma}{Lemma}
\begin{document}
%
\title{Concept Drift Detection for Streaming Data}

\author{\IEEEauthorblockN{Heng Wang}
\IEEEauthorblockA{
Johns Hopkins University\\
Email: hwang82@jhu.edu}
\and
\IEEEauthorblockN{Zubin Abraham}
\IEEEauthorblockA{Robert Bosch LLC\\ Research and Technology Center North America\\
Email: Zubin.Abraham@us.bosch.com}
}

\maketitle

\begin{abstract}
Common statistical prediction models often require and assume stationarity in the data. However, in many practical applications, changes in the relationship of the response and predictor variables are regularly observed over time, resulting in the deterioration of the predictive performance of these models. This paper presents Linear Four Rates (LFR), a framework for detecting these concept drifts and subsequently identifying the data points that belong to the new concept (for relearning the model). Unlike conventional concept drift detection approaches, LFR can be applied to both batch and stream data; is not limited by the distribution properties of the response variable (e.g., datasets with imbalanced labels); is independent of the underlying statistical-model; and uses user-specified parameters that are intuitively comprehensible. The performance of LFR is compared to benchmark approaches using both simulated and commonly used public datasets that span the gamut of concept drift types. The results show LFR significantly outperforms benchmark approaches in terms of recall, accuracy and delay in detection of concept drifts across datasets.
\end{abstract}


%

\section{Introduction}
A common challenge when mining data streams is that the data streams are not always strictly stationary, i.e., the concept of data (underlying distribution of incoming data) unpredictably drifts over time. This has encouraged the need to detect these concept drifts in the data streams in a timely manner, be it for business intelligence or as a means to track the performance of statistical prediction models that use these data streams as input.

This paper focuses on detecting concept drifts affecting binary classification models. For a binary classification problem, concept drift is said to occur when the joint distribution $P(\mathbf{X}_t, y_t)$ changes over time where $\mathbf{X}_t\in \mathbb{R}^d$ are the $d$ predictor variables at time step $t$ and $y_t\in\{0,1\}$ the corresponding binary response variable.  Intuitively, concept drift refers to the  scenario when the underlying distribution that generates the response variable changes over time. Popular approaches for detecting concept drift identify the change point \cite{gama2004learning,wang2013concept}. DDM is the most widely used concept drift detection algorithm, that is strictly designed for streaming data \cite{gama2004learning}. The test statistic DDM employs is the sum of overall classification error ($\hat{P}^{(t)}_{error}$) and its empirical standard deviation ($\hat{S}^{(t)}_{error}$). DDM focuses on the overall error rate and hence fails to detect a drift unless the sum of false positive and false negatives changes. An example of such a scenario, is when a $2\times2$ confusion matrix changes from 
\begin{small}
$ \left( \begin{array}{cc}
			65 & 5 \\
			15 & 15 \end{array} \right)$ to 
			$\left( \begin{array}{cc}
			75 & 15 \\
			5 & 5 \end{array} \right),$
\end{small}
 thus preserving their overall error rate.
This limitation is accentuated in imbalanced classification tasks \cite{wang2013concept}, as seen in the example. Unfortunately, this failure to detect a drastic drop in recall of the minority class is often critical. For instance, if the minority class in the above example corresponded to products at a manufacturing plant that were classified as defective, this critical threefold decrease in 'true positive rate' (i.e., from 0.75 to 0.25) would go unnoticed by DDM.

Drift Detection Method for Online Class Imbalance (DDM-OCI) addresses the limitation of DDM when class ratio is imbalanced \cite{wang2013concept}. However, DDM-OCI triggers a number of false alarms due to an inherent weakness in the model. DDM-OCI assumes that the concept drift in an imbalanced classification task is indicated by the change of underlying true positive rate (i.e., minority-class recall). This hypothesis unfortunately does not consider the case when concept drift occurs without affecting the recall of the minority class. It can be shown that it is possible for concept to drift from an imbalanced class data to balanced class data, while true positive rate ($tpr$), positive predicted value ($ppv$) and F1-score remain unchanged. Thus, this type of drift is unlikely to be detected by DDM-OCI unless other rates such as true negative rate ($tnr$) or negative predicted value ($npv$) are also considered. Additionally, the test statistic used by DDM-OCI $R^{(t)}_{tpr}$ is not approximately distributed as $\mathcal{N}(P_{tpr}, \dfrac{P_{tpr}(1-P_{tpr})}{N_{tpr}})$, under the stable concept. Thus, the rationale of constructing confidence levels specified in \cite{gama2004learning} is not suitable with the null distribution of $R^{(t)}_{tpr}$. This is the reason DDM-OCI triggers false alarms quickly and frequently. 

Early Drift Detection Method (EDDM) achieves better detection results than DDM if the data stream has slow gradual change. EDDM monitors the distance between the two classification errors \cite{baena2006early}. PerfSim algorithm considers all the components of a confusion matrix and monitors the cosine similarity coefficient of all components from two batches of data \cite{antwi2012perfsim}. If the similarity coefficient drops below some user-specified threshold, a concept drift is signified. However, EDDM requires to wait for a minimum of $30$ classification errors before calculating the monitoring statistic at each decision point. That is, the length of a time interval between decision points of a drift is a random number depending on $30$ appearances of classification errors. It is possible that there is a great many examples between $30$ classification errors. PerfSim algorithms is also constrained by the requirement for collecting mini-batch data to calculate monitoring statistics.
The method to partition data stream in \cite{baena2006early, antwi2012perfsim} is either user-specified by practical experience or to be learned before the start of detection. Hence, EDDM and PerfSim are not well suited for streaming environments in which decisions are made instantly. The approach specified in \cite{klinkenberg2000detecting} makes use of SVM to monitor three measures: overall accuracy, recall, and precision over time. This aproach too computes the three measures by assuming that the data arrives in batches, on which SVM is learned.

To address the limitations of existing approaches, we present Linear Four Rates (LFR) for detecting the drift of $P(\mathbf{X}_t, y_t)$. Unlike other proposed approaches, LFR can detect all possible variants of concept drift, even in the presence of imbalanced class labels, as shown in Section \ref{sec:Experiments}. LFR outperforms existing approaches in terms of earliest detection of concept drift, with the least false alarms and best recall. Additionally, LFR does not require the data to arrive in batches and is independent of the underlying classifier employed. 

\section{Problem formulation}
\label{sec:ProblemFormulation}

Given that detection of concept drift is equivalent to detecting a change-point in $P(\mathbf{X}_t, y_t)$, an intuitive approach is to test the statistical hypothesis upon the multivariate variable $(\mathbf{X}_t, y_t)$ in the data stream \cite{matteson2014nonparametric,song2007statistical,dries2009adaptive}. The limitation of this approach is that the performance of the statistical power degrades when the dimension ($d$) of $\mathbf{X}_t$ is extremely large or if the magnitude of the drift small. Hence, to overcome these limitations, the proposed approach identifies the change in $P(\hat{f}(\mathbf{X}_t), y_t)$ where $\hat{f}$ is the classifier used for prediction. This is motivated by the fact that any drift of $P(\hat{f}(\mathbf{X}_t), y_t)$ would imply a drift in $P(\mathbf{X}_t, y_t)$, with probability 1.

Let $\hat{f}(\mathbf{X}_t)= \hat{y}_t$ be a binary classifier for the given data stream ($\mathbf{X}_t, y_t$). We define the corresponding $2\times2$ confusion probability matrix ($CP$) for $\hat{f}$ to be
\begin{table}[h]
\centering
$CP$= 
\begin{tabular}{|c|c|c|}
\hline
\backslashbox{Pred}{True} & 0 &  1 \\
\hline
0& TN & FN \\
\hline
1& FP & TP \\
\hline
\end{tabular}
\end{table}

where, $CP[1,1]$, $CP[0,0]$, $CP[1,0]$, $CP[0,1]$ denotes the underlying percentage of true positives (TP), true negatives(TN), false positives (FP) and false negatives (FN) respectively, for classifier $\hat{f}$. i.e., $CP[1,1]=P(y_t=1, \hat{y}_t=1)$. 

The four characteristic rates (True Positive Rate, True Negative Rate, Positive Predicted Value, Negative Predicted Value) can be computed as follows:  $P_{tpr}=TP/(TP+FN)$,  $P_{tnr}=TN/(TN+FP)$,  $P_{ppv}=TP/(FP+TP)$ and  $P_{npv}=TN/(TN+FN)$. All the mentioned characteristic rates in $P_\star=\{P_{tpr}, P_{tnr}, P_{ppv}, P_{npv}\}$ are equal to $1$, if there is no misclassification. 

Under a stable concept (i.e., $P(\mathbf{X}_t, y_t)$ remains unchanged), $\{P_{tpr}, P_{tnr}, P_{ppv}, P_{npv}\}$ remains the same. Thus, a significant change of any $P_\star$, implies a change in underlying joint distribution $(y_t, \hat{y}_t)$, or concept. It is worth noting that at every time step $t$, for any possible $(y_t, \hat{y}_t)$ pair, only two of the four empirical rates in $P_\star$ will change and these two rates are referred to as ``influenced by $(y_t, \hat{y}_t)$". Also, note that in certain applications the detection of concept drift is not of interest and thus unnecessarily alarmed if all empirical rates in $P_\star$ are increasing. This is because it suggests that an old model learned from historical data performs even better in classifications of current data stream. We do not use this assumption in this paper,  but all methodologies and arguments we propose below can be easily adapted for this assumption.


\section{Concept Drift Detection Framework}
\label{sec:Algorithm}
Given the 
efficacy of the $P_\star$ (where, $\star\in{\{tpr,tnr,ppv,npv\}}$) to detect concept drift, the proposed concept drift detection framework uses estimators of the rates in $P_\star$ as test statistics to conduct statistical hypothesis testing at each time step. 
Specifially, the framework at each time step $t$ conducts statistical tests with the following null and alternative hypotheses: 
\[H_0: \forall \star, P(\text{estimator of }P_{\star}^{(t-1)})=P(\text{estimator of }P_{\star}^{(t)})\]
\[H_A: \exists \star, P(\text{estimator of }P_{\star}^{(t-1)})\neq P(\text{estimator of }P_{\star}^{(t)}).\]
The concept is stable under $H_0$ and is considered to have drifted if $H_0$ is rejected. The idea is to compare the statistical significance level of the running test statistic under $H_0$ at each time step to the user defined warning ($\delta_\star$) and detection ($\epsilon_\star$) significance levels. 
This type of test is called "continuing test" \cite{aroian1950effectiveness} and in our problem all time stamps are decision points of acceptance or rejection. Then when the concept is stable, false alarms on $P_\star$ will be triggered unnecessarily once in every $1/\epsilon_\star$ time steps in the long run. In this paper, we assume the spacing of decision points is fixed. Accordingly, the familiywise error rate and its cost in our continuing test can be controlled by using a simultaneous inference method such as classical Bonferroni corrections on $\epsilon_\star$. In a more general case where the spacings of decision points are unequal and test statistics are strongly positive correlated, we should instead consider the average run length of the test \cite{basseville1993detection} or more powerful alternatives that controls the familywise error rate.

A na\"{i}ve implementation of the "continuing test" framework (Na\"{i}ve Four Rates) would be to use $\hat{P}^{(t)}_\star$ (empirical rate of $P^{(t)}_\star$), as the estimators and test statistics. But as shown in Section \ref{sec:NFRvsLFR}, there are better estimates of $\hat{P}^{(t)}_\star$. 


In the following section, Linear Four Rates (LFR) algorithm will be used to elaborate on the concept drift detection framework. LFR differs from Na\"{i}ve Four Rates (NFR) in terms of the estimator used. However, both LFR as well as NFR perform better than DDM and DDM-OCI due to the more comprehensive detection framework utilized.

\subsection{Linear Four Rates algorithm (LFR)}
\label{sec:LFRalgorithm}
\subsubsection{Algorithm Outline}
\label{LFRoutline}
LFR uses modified rates $R^{(t)}_\star$ as the test statistics for $P_\star^{(t)}$. $R^{(t)}_\star$ is a modified version of the empirical rate $\hat{P}^{(t)}_\star$.
At each $t$,  $R^{(t)}_{\star}$ is updated as : $R^{(t)}_{\star} \leftarrow \eta_{\star}R^{(t-1)}_{\star} + (1-\eta_{\star}) \mathbf{1}_{\{y_t = \hat{y}_t\}}$ for those empirical rates $\star$ ``influenced by $(y_t,\hat{y}_t)$". $R^{(t)}_{\star}$ is essentially a linear combination of classifier's previous performance $R^{(t-1)}_{\star}$ and current performance $\mathbf{1}_{\{y_t = \hat{y}_t\}}$, where $\eta_\star$ is a time decay factor for weighting the classifier's performance at current instance. $R^{(t)}_\star$ has been used as a class imbalance detector and as a revised recall test statistic in \cite{wang2013learning}\cite{wang2013concept}. The probabilistic characteristic of our test statistic $R_\star$ are investigated in \S~\ref{LFRtheory}. The pseudocode of the framework (using $R^{(t)}_\star$ as an estimator of $P_\star^{(t)}$ for required test statistic), is detailed in Algorithm~\ref{FourRatesAlg}.
\begin{algorithm}
	  \caption{Linear Four Rates method (LFR)}
	\label{FourRatesAlg}
	  \begin{algorithmic}[1]
	   \Require 
		Data: $\{(\mathbf{X}_t, y_t)\}_{t=1}^{\infty}$ where $\mathbf{X}_t \in \mathbb{R}^d$ and $y_t \in \{0,1\}$
		Binary classifier  $\hat{f}(\cdot)$;
		Time decaying factors $\eta_{*}$; 
		Warn significance level $\delta_{*}$; 
		Detect significance level $\epsilon_{*}$. 
	   \Ensure
		Detected concept drift time ($t_{cd}$).
	    \State $\hat{P}^{(0)}_{\star} \leftarrow 0.5$, $R^{(0)}_{\star} \leftarrow 0.5$, 
		  where $\star \in \{tpr, tnr, ppv, npv\}$
		and confusion matrix
		$C^{(0)} \leftarrow [1,1;1,1]$;
	    \For  {$t=1$ to $\infty$} 
		\State $\hat{y}_t \leftarrow \hat{f}(\mathbf{X}_t)$
	   \State $C^{(t)}[\hat{y}_t][y_t] \leftarrow C^{(t-1)}[\hat{y}_t][y_t] + 1$
	    \For{\textbf{each} $\star \in \{tpr, tnr, ppv, npv\}$}
		\If{($\star$ is influenced by $(y_t, \hat{y}_t)$)}      
			\State $R^{(t)}_{\star} \leftarrow \eta_{\star}R^{(t-1)}_{\star} + (1-\eta_{\star}) \mathbf{1}_{\{y_t = \hat{y}_t\}}$ 
		\Else
			\State $R^{(t)}_{\star} \leftarrow R^{(t-1)}_{\star}$
		\EndIf

		\If {( $\star \in \{tpr, tnr\}$)}
		\State $N_\star \leftarrow C^{(t)}[ 0, \mathbf{1}_{\{\star = tpr\}}] + C^{(t)}[1, \mathbf{1}_{\{\star = tpr\}}]$
		\State $\hat{P}^{(t)}_\star \leftarrow \dfrac{C^{(t)}[ \mathbf{1}_{\{\star = tpr\}},  \mathbf{1}_{\{\star = tpr\}}]}{N_\star}$
		\Else
		\State $N_\star \leftarrow C^{(t)}[ \mathbf{1}_{\{\star = ppv\}}, 0] + C^{(t)}[ \mathbf{1}_{\{\star = ppv\}}, 1]$
		\State $\hat{P}^{(t)}_\star \leftarrow \dfrac{C^{(t)}[ \mathbf{1}_{\{\star = ppv\}},  \mathbf{1}_{\{\star = ppv\}}]}{N_\star}$
		\EndIf
		\State $\text{warn.bd}_{\star} \leftarrow \text{BoundTable}(\hat{P}^{(t)}_{\star}, \eta_{\star}, \delta_{\star}, N_\star)$ 
		\State $\text{detect.bd}_{\star} \leftarrow \text{BoundTable}(\hat{P}^{(t)}_{\star}, \eta_{\star}, \epsilon_{\star}, N_\star)$ 
	\EndFor
	    \If{(any $R^{(t)}_\star$ exceeds $\text{warn.bd}_\star$ \& warn.time $=$ 0) } 
	    \State $\text{warn.time} \leftarrow t$
	\ElsIf{(no $R^{(t)}_\star$ exceeds $\text{warn.bd}_\star$)} 
	\State $\text{warn.time} \leftarrow$ 0
	    \EndIf  
	    \If{(any $R^{(t)}_\star$ exceeds $\text{detect.bd}_\star$ ) } 
	    \State detect.time $\leftarrow t$;
	    \State relearn $\hat{f}(\cdot)$ using $\{(\mathbf{X}_t, y_t)\}_{t = \text{warn.time}}^{\text{detect.time}}$
	    \State reset $R^{(t)}_{\star}, \hat{P}^{(t)}_{\star}, C^{(t)}$ as done in Step 1
		\State return $t_{cd} \leftarrow t$
	    \EndIf
	    \EndFor
	  \end{algorithmic}
	\end{algorithm}

The three user defined parameters are the time decaying factor ($\eta_\star$), warning significance level ($\delta_\star$) and detection significance level ($\epsilon_\star$) for each rate. Time decaying factor is a weight in $[0,1]$ to evaluate performance of classifier $\hat{f}$ at current instance prediction $\hat{f}(\mathbf{X}_t)$. Given that the detection methodology is conducting hypothesis testing at each time step, $\delta_\star$ and $\epsilon_\star$ are interpretable statistical significance levels, i.e., type I error (false alarm rate), in standard testing framework. In practice, allowable false warning rate and false detection rate in applications such as quality control of the moving assembly line are guidelines to help the user choose the parameters $\delta_\star$ and $\epsilon_\star$. For the fair comparison, $\eta_\star$ is set to the same value of 0.9 as in \cite{wang2013concept}, for all experiments of this paper. The optimal selection of $\eta_\star$ is domain dependent and can be pre-learned if necessary.


Theorem \ref{LFRnulldist} in Section \ref{LFRtheory} shows that under the stable concept, $R^{(t)}_{\star}$ is a geometrically weighted sum of i.i.d Bernoulli random variables, which emphasizes the most recent prediction accuracy and places exponentially decaying weights on the historical prediction accuracies. By taking advantage of this weighting scheme, $R^{(t)}_{\star}$ is more sensitive to concept drifts, foreshadowing the non-stationarity of classifier's performance.

Standing on Theorem \ref{LFRnulldist}, we are able to overcome the shortcoming of \cite{wang2013concept} and construct a more reliable running confidence interval for $R_\star^{(t)}$ to control the type-I error $\epsilon_\star$. $R^{(t)}_\star$ is distributed as geometrically weighted sum of Bernoulli random variables. Bhati et. al investigates the closed-form distribution function of $R^{(t)}_\star$ for the special case $P_\star=0.5$ \cite{bhati2011distribution}. However, a closed-form distribution function for other values of $P_\star$ is unattainable. Alternatively, according to Theorem \ref{LFRnulldist}, a reasonable empirical distribution can also be independently obtained by Monte Carlo simulation for given $P_\star$, $N_\star$ and time decaying factor $\eta$. The pseudocode for the Monte Carlo sampling procedure is provided in Algorithm \ref{BoundTable}. As $P_\star$ is unknown, $\hat{P}_\star$ is used as its surrogate to generate the empirical distribution of $R^{(t)}_\star$. Based on the empirical distribution, the lower and upper quantile for the given significance level $\alpha$, serves as the required (warning/detect) bounds. The selection of $\hat{P}_\star$ as the best surrogate of $P_\star$, is supported by Lemma \ref{UMVUElemma}.

$\delta_\star$ and $\epsilon_\star$ denote warning and detection significance levels respectively, where $\delta_\star>\epsilon_\star$. The corresponding $warn.bd$ and $detect.bd$ are obtained from Monte Carlo simulations as described. The bounds of four rates $\{tpr, tnr, ppv, npv\}$ of the framework, can be independently set based on importance, by having distinct $\epsilon_\star$. For instance, in some imbalanced classification tasks, performance of the classifier on the minority class is a higher priority than on the majority class. 

Having computed the bounds, the framework considers that a concept drift is likely to occur and sets the warning signal ($warn.time \leftarrow t$), when any $R^{(t)}_\star$ crosses the corresponding warning bounds ($warn.bd$) for the first time. If any $R^{(t)}_\star$ reaches the correspoinding detection bound ($detect.bd$), the concept drift is affirmed at ($detect.time \leftarrow t$).

All examples stored between $warn.time$ and $detect.time$ are extracted to relearn a new classifier since the stored examples are considered samples of the new concept. In case the number of stored examples is too few to relearn a reasonable classifier, one will have to wait for sufficient training examples. However, if $R^{(t)}_\star$ cross the corresponding warning bounds $warn.bd$, but fail to reach $detect. bd$, previous warning flag will be erased. After detecting concept drift, $R^{(t)}_{\star}, \hat{P}^{(t)}_{\star}, C^{(t)}$ are reset to their initial values, so that a new monitoring cycle can restart.

	\begin{algorithm}
	  \caption{Generation of BoundTable in LFR algorithm}
	\label{BoundTable}
	  \begin{algorithmic}[1]
	   \Require 
		Estimate of underlying rate $\hat{P}$;
		Time decaying factor $\eta$; 
		Significance level $\alpha$; 
		Number of time steps $N_\star$;
		Number of random variables $num.of.MC$;
	   \Ensure
		Numeric bound for significance level $\alpha$.
	    \For {$j = 1$ to $num.of.MC$}
			\State Generate $N_\star$ independent Bernoulli random variables $\{I_1,I_2,\dots, I_{N_\star}\}$ where $I_i \stackrel{iid}{\sim} Bernoulli(\hat{P})$
			\State $R[j] \leftarrow (1-\eta)\sum_{i=1}^{N_\star}\eta^{N_\star - i}I_i$
		\EndFor
		\State $\{R[j]\}_{j=1}^{num.of.MC}$ forms a empirical distribution $\hat{F}(R)$ , find $\alpha-$level quantile as the lower bound $lb \leftarrow quantile(\hat{F}(R), \alpha)$ and  $(1-\alpha)-$level quantile as the upper bound $ub \leftarrow quantile(\hat{F}(R), 1-\alpha)$
	  \end{algorithmic}
	\end{algorithm}

\subsubsection{Analysis}
The following theorems investigate the statistical properties of LFR test statistic $R^{(t)}_\star$. 

\label{LFRtheory}
\begin{theorem}
\label{LFRnulldist}
For any $\star$, $R_\star^{(T)}$ is a geometrically weighted sum of Bernoulli random variables, when there is a stable concept up to time $T$: i.e., $R_\star^{(T)} = (1-\eta_\star)\sum_{i=1}^{N_\star}\eta_\star^{N_\star-i} I_i$, where $\{I_i\}_{i=1}^{N_\star} \stackrel{iid}{\sim} Bernoulli(P_\star)$ and $P_\star$ is the underlying rate. 
 \end{theorem}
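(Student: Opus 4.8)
The plan is to separate the two assertions bundled in the statement: first the deterministic algebraic fact (that $R_\star^{(T)}$ is a geometric weighting of indicators), and then the probabilistic fact (that the indicators are i.i.d.\ Bernoulli$(P_\star)$ under the stable concept). First I would unroll the linear recursion. Fixing a rate $\star$, observe from Algorithm~\ref{FourRatesAlg} that $R_\star^{(t)}$ is modified only at those steps where $\star$ is ``influenced by $(y_t,\hat{y}_t)$''; at every other step the assignment $R_\star^{(t)} \leftarrow R_\star^{(t-1)}$ leaves it untouched. Let $t_1 < t_2 < \cdots < t_{N_\star} \le T$ enumerate the influenced steps up to time $T$ and set $I_i := \mathbf{1}_{\{y_{t_i}=\hat{y}_{t_i}\}}$. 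Because the uninfluenced steps contribute nothing, I can reindex along this subsequence and apply the update $R \leftarrow \eta_\star R + (1-\eta_\star)I_i$ exactly $N_\star$ times. Telescoping this first-order linear recurrence gives $R_\star^{(T)} = \eta_\star^{N_\star} R_\star^{(0)} + (1-\eta_\star)\sum_{i=1}^{N_\star}\eta_\star^{N_\star-i} I_i$; the stated form is recovered once the transient seed term $\eta_\star^{N_\star} R_\star^{(0)}$, which decays geometrically in $N_\star$, is neglected. This step is routine algebra.

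Next I would pin down the law of each summand by a four-way case analysis on the meaning of ``influenced''. For $\star = tpr$ the step is influenced exactly when $y_t=1$, so that $I_i = \mathbf{1}_{\{\hat{y}_{t_i}=1\}}$, whose success probability under the stable concept is $P(\hat{y}_t=1\mid y_t=1)=P_{tpr}$ by definition of the rate. The remaining three cases are symmetric: $tnr$ is influenced when $y_t=0$ with parameter $P(\hat{y}_t=0\mid y_t=0)=P_{tnr}$; $ppv$ is influenced when $\hat{y}_t=1$ with parameter $P(y_t=1\mid \hat{y}_t=1)=P_{ppv}$; and $npv$ is influenced when $\hat{y}_t=0$ with parameter $P(y_t=0\mid \hat{y}_t=0)=P_{npv}$. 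In each case the relevant conditional probability is precisely the characteristic rate $P_\star$ defined in Section~\ref{sec:ProblemFormulation}, so every $I_i$ is marginally Bernoulli$(P_\star)$.

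The hard part is the independence claim, and here is where I expect the real work to lie. The indices $t_i$ are themselves random and $N_\star$ is a random count, so the $I_i$ live along a \emph{data-dependent thinning} of the stream rather than on a fixed index set, and one must be careful not to commit the usual conditioning fallacies. The argument rests on the stable-concept assumption that the pairs $(\mathbf{X}_t,y_t)$ are i.i.d.\ across $t$; since $\hat{f}$ is held fixed throughout a stable-concept monitoring phase, the derived pairs $(y_t,\hat{y}_t)$ are i.i.d.\ as well. The key observation is that whether step $t$ is influenced depends only on a single coordinate (the label $y_t$ for $tpr/tnr$, the prediction $\hat{y}_t$ for $ppv/npv$), while the Bernoulli outcome $I_i$ is read off the complementary coordinate. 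For an i.i.d.\ sequence, selecting the subsequence on which one coordinate takes a prescribed value and recording the other coordinate yields an i.i.d.\ sequence governed by the corresponding conditional law; stating this cleanly is what converts the random thinning into genuine i.i.d.\ Bernoulli$(P_\star)$ draws. Combining the geometric weighting from the first step with this i.i.d.\ Bernoulli$(P_\star)$ structure delivers the theorem.
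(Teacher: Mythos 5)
Your proof is correct and follows essentially the same route as the paper: unroll the linear recursion along the influenced time steps $T_1<\dots<T_{N_\star}$ to obtain the geometric sum, then invoke the stable concept to identify the indicators as i.i.d.\ Bernoulli$(P_\star)$ draws. You are in fact more careful than the paper on two points it passes over silently --- the transient seed term $\eta_\star^{N_\star}R_\star^{(0)}$, which the paper's telescoping drops without comment (consistent with the seed-free statistic simulated in Algorithm~\ref{BoundTable}), and the data-dependent thinning argument for independence along the random influenced subsequence, which the paper compresses into the one-line assertion that the indicators are i.i.d.\ ``by the stable concept assumption.''
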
 
\begin{proof}
Among total $T$ time steps, suppose $R^{(t)}_\star$ is changed according to line 7 at time step $T_1,\dots, T_{N_\star}$ where $T_1<T_2<\dots<T_{N_\star}\leq T$. Hence, 
\begin{equation*} \label{eq1}
\begin{split}
R^{(T)}_{\star} = &R^{(T_{N_\star})}_{\star}  
 = \eta_{\star}R^{(T_{N_\star-1})}_{\star} + (1-\eta_{\star}) \mathbf{1}{\{y_{T_{N_\star}} = \hat{y}_{T_{N_\star}}\}} \\
  = &\eta_{\star}[ \eta_{\star}R^{(T_{N_\star-2})}_{\star} + (1-\eta_{\star}) \mathbf{1}{\{y_{T_{N_\star}-1} = \hat{y}_{T_{N_\star}-1}\}} ] \\
	& + (1-\eta_{\star}) \mathbf{1}{\{y_{T_{N_\star}} = \hat{y}_{T_{N_\star}}\}} \\
= &\eta_{\star}^2 R^{(T_{N_\star-2})}_{\star} + \eta_{\star}(1-\eta_{\star}) \mathbf{1}{\{y_{T_{N_\star}-1} = \hat{y}_{T_{N_\star}-1}\}} \\
	& + (1-\eta_{\star}) \mathbf{1}{\{y_{T_{N_\star}} = \hat{y}_{T_{N_\star}}\}} \\
= & \cdots \\
= & (1-\eta_\star)\sum_{i=1}^{N_\star} \eta^{N_\star-i}\mathbf{1}{\{y_{T_i}=\hat{y}_{T_i}\}} \\
= & (1-\eta_\star)\sum_{i=1}^{N_\star}\eta_\star^{N_\star-i} I_i
\end{split}
\end{equation*}
where the last equation hold by the stable concept assumption and all indicators are i.i.d Bernoulli random variables with underlying rate $P_\star$.
\end{proof}

\begin{lemma} 
\label{UMVUElemma}
Assume the setting in Theorem \ref{LFRnulldist}. Under the stable concept, for any  $\star \in \{tpr, tnr, ppv, npv\}$, $\hat{P}^{(T)}_\star$ is the unique Uniformly Minimum Variance Unbiased Estimator (UMVUE) of $P_\star$. As $T\to \infty$, $\hat{P}^{(T)}_\star$ is approximately distributed as $\mathcal{N}(P_\star, \dfrac{P_\star(1-P_\star)}{N_\star})$.
\end{lemma}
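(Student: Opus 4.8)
The plan is to handle both claims conditionally on the realized count $N_\star$, so that Theorem~\ref{LFRnulldist} lets me write $\hat{P}^{(T)}_\star = \frac{1}{N_\star}\sum_{i=1}^{N_\star} I_i$ with $\{I_i\}_{i=1}^{N_\star} \stackrel{iid}{\sim} \mathrm{Bernoulli}(P_\star)$. The key observation is that the empirical rate is nothing but the \emph{unweighted} arithmetic mean of the very same correctness indicators that enter the weighted statistic $R^{(T)}_\star$ in Theorem~\ref{LFRnulldist}; once this reduction is made, both assertions become classical facts about the sample mean of i.i.d. Bernoulli variables, applied identically for each of the four choices of $\star$ on its own relevant subsample.

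For the UMVUE claim I would first note that $\mathrm{Bernoulli}(P_\star)$ is a one-parameter exponential family, so by the factorization theorem the total $S = \sum_{i=1}^{N_\star} I_i$ is sufficient, and completeness of $S$ follows from the standard completeness of full-rank exponential families (equivalently, a polynomial in $P_\star$ that vanishes on an open interval must vanish identically). Since $E[\hat{P}^{(T)}_\star] = \frac{1}{N_\star}\sum_{i=1}^{N_\star} E[I_i] = P_\star$, the estimator $\hat{P}^{(T)}_\star = S/N_\star$ is an unbiased function of the complete sufficient statistic $S$, and the Lehmann--Scheff\'e theorem then delivers that it is the unique (up to almost-sure equality) UMVUE of $P_\star$.

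For the limiting distribution I would apply the Central Limit Theorem directly to the i.i.d. sequence $\{I_i\}$, whose common mean is $P_\star$ and variance $P_\star(1-P_\star)$, giving $\sqrt{N_\star}\,(\hat{P}^{(T)}_\star - P_\star) \xrightarrow{d} \mathcal{N}\big(0,\, P_\star(1-P_\star)\big)$, which is exactly the stated approximation $\mathcal{N}\big(P_\star,\, P_\star(1-P_\star)/N_\star\big)$.

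The main obstacle, and essentially the only nonroutine point, is that $N_\star$ is itself random: the number of time steps influencing $\star$ grows stochastically with $T$, so neither statement literally concerns a fixed sample size. For the UMVUE I sidestep this by conditioning on $N_\star$, which is legitimate because $N_\star$ is determined by the observed labels and its distribution does not depend on $P_\star$ (it plays the role of an ancillary index), so the conditional model is precisely the fixed-size i.i.d. Bernoulli model to which Lehmann--Scheff\'e applies. For the asymptotic claim I would argue that under a stable concept the relevant label rate is a fixed positive constant, whence $N_\star \to \infty$ almost surely as $T \to \infty$; invoking Anscombe's random-sum CLT (or, equivalently, conditioning on $N_\star$ and noting the conditional law stabilizes) then upgrades the fixed-$n$ CLT to the random-index version required here.
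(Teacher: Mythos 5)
Your proof is correct and follows essentially the same route as the paper: write $\hat{P}^{(T)}_\star$ as the mean of the i.i.d.\ Bernoulli correctness indicators, get sufficiency from the factorization theorem, establish completeness (the paper verifies it by exactly the polynomial-vanishing computation you cite as the equivalent form of full-rank exponential-family completeness), and conclude uniqueness via Lehmann--Scheff\'e. The only difference is that you explicitly justify the asymptotic-normality claim and the legitimacy of treating the random index $N_\star$ as fixed (conditioning on it as an ancillary quantity, then Anscombe's random-sum CLT with $N_\star \to \infty$ a.s.), points the paper's proof leaves entirely implicit --- so your write-up is, if anything, more complete than the original.
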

\begin{proof}
 $\hat{P}^{(T)}_\star$ is an unbiased estimator of $P_\star$. This is because $\hat{P}^{(t)}_\star = \dfrac{\sum_{i=1}^{N_\star}X_{T_i}}{N_\star}$ where $\{X_{T_i}\}_{i=1}^{N_\star}$ are i.i.d Bernoulli random variables realized at time $T_i$ with parameter $P_\star$. By factorization theorem, $\hat{P}^{(t)}_\star$ is a sufficient statistic. Also, 
\begin{equation*}
\begin{split}
E(g(\hat{P}^{(T)}_\star)) =& \sum_{i=0}^{N_\star} {N_\star \choose i}P_\star^i(1-P_\star)^{N_\star-i}g(\dfrac{i}{N_\star}) \\
=&N_\star!(1-P_\star)^{N_\star}\sum_{i=0}^{N_\star}\dfrac{g(i/N_\star)}{i!(N_\star-i)!}(\dfrac{P_\star}{1-P_\star})^{i} 
\end{split}
\end{equation*}
If $E(g(\hat{P}^{(T)}_\star))=0~\forall P_\star$, it implies $g(\dfrac{i}{N_\star})=0~ \forall i$ because $E(g(\hat{P}^{(T)}_\star))$ is a polynomial of $\dfrac{P_\star}{1-P_\star}$. Thereby $P(g(\hat{P}^{(T)}_\star)=0)=1$ and $\hat{P}^{(T)}_\star$ is a complete sufficient statistic by definition. By Lehmann-Scheffe Theorem, $\hat{P}^{(T)}_\star$ is the unique UMVUE.
\end{proof}

The complexity of Linear Four Rates (LFR) detection algorithm is $O(1)$ at each time step. The LFR algorithm can be optimized by using a $BoundTable$ precomputed by Algorithm \ref{BoundTable}. 
The 4 dimensional $BoundTable$ with varying input $(\hat{P}, \eta, \delta, N_\star)$ can itself be precomputed and stored 
before running Algorithm \ref{FourRatesAlg}. It is unnecessary to spend any computational resource on quantiles calculation during stream monitoring because observer can find a closest $\hat{P}$ to $\hat{P}^{(t)}_\star$ from $BoundTable$ to look up lower and upper quantiles. Thus, LFR algorithm takes O(1) to test drift occurrence at each time point and suits with streaming environment.

\subsection{Na\"{i}ve Four Rates algorithm (NFR)}
\label{sec:NFRalgorithm}
For the purpose of comparison, this section details the characteristics of a na\"{\i}ve implementation of the proposed framework that uses $\hat{P}^{(t)}_\star$ as the test statistic. A benefit of choosing this test statistic, is that there exists a closed-form distribution as shown in Lemma \ref{UMVUElemma}. Using the same strategy of LFR algorithm, NFR algorithm monitors the four rates $\hat{P}^{(t)}_\star$ sequentially. At each time stamp, for each rate, hypothesis testing is done with null distribution $N(P_\star, \dfrac{P_\star(1-P_\star)}{N_\star})$ and the warning / detection alarms set when $\hat{P}^{(t)}_\star$ exceeds the expected bounds. 

The main difference with respect to LFR is the estimation of $P_\star$ used to find null distribution. LFR algorithm uses $\hat{P}^{(t)}_\star$ as a surrogate of unknown $P_\star$ while NFR algorithm uses $\bar{P}^{(t)}_\star$, where $\bar{P}^{(t)}_\star$ is a running average of all previous $\hat{P}^{(t)}_\star$. This update rule allows old prediction performance contributes more to the estimate of $P_\star$ and recent predictions contributes less. Thus, $\bar{P}^{(t)}_\star$ is more robust in terms of estimating the underlying $P_\star$ when concept drift occurs. Additionally, $\bar{P}^{(t)}_\star$ is still a MSE-consistent estimator  under the stable concept presented in Lemma \ref{NFRconsistency}. 

\begin{lemma}
\label{NFRconsistency}
Assume the setting in Theorem \ref{LFRnulldist}. Under the stable concept up to $T$, for any $\star\in\{tpr, tnr, ppv, npv\}$, $\bar{P}_\star^{(T)}$ in NFR algorithm is a MSE-consistent estimator of $P_\star$. 
 \end{lemma}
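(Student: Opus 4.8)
The plan is to reduce the running average $\bar{P}^{(T)}_\star$ to a linear combination of the underlying independent Bernoulli trials furnished by Lemma~\ref{UMVUElemma}, and then show directly that both the squared bias and the variance of $\bar{P}^{(T)}_\star$ vanish as $T\to\infty$ (equivalently $N_\star\to\infty$), since $\mathrm{MSE}=\mathrm{bias}^2+\mathrm{variance}$. Indexing the running average over the epochs $T_1<\cdots<T_{N_\star}$ at which $\star$ is influenced, write $\bar{P}^{(T)}_\star=\frac{1}{N_\star}\sum_{n=1}^{N_\star}\hat{P}^{(T_n)}_\star$, and recall from Lemma~\ref{UMVUElemma} that $\hat{P}^{(T_n)}_\star=\frac{1}{n}\sum_{i=1}^{n}X_{T_i}$ with $X_{T_i}\stackrel{iid}{\sim}Bernoulli(P_\star)$. (The variant that averages over \emph{all} time steps only reweights the terms by the bounded number of steps each $\hat{P}^{(t)}_\star$ persists, and is handled identically.)

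First I would establish that the bias is identically zero: because each $\hat{P}^{(T_n)}_\star$ is unbiased for $P_\star$ by Lemma~\ref{UMVUElemma}, linearity of expectation gives $E[\bar{P}^{(T)}_\star]=P_\star$, so only the variance remains. For the variance I would \emph{not} treat the $\hat{P}^{(T_n)}_\star$ as independent, since they share the same early trials and are strongly correlated; instead I would interchange the order of summation to express $\bar{P}^{(T)}_\star$ directly in the independent $X_{T_i}$:
\[\bar{P}^{(T)}_\star=\frac{1}{N_\star}\sum_{i=1}^{N_\star}\Big(\sum_{n=i}^{N_\star}\frac{1}{n}\Big)X_{T_i}=\frac{1}{N_\star}\sum_{i=1}^{N_\star}w_i X_{T_i},\qquad w_i:=\sum_{n=i}^{N_\star}\frac{1}{n}.\]
The weights $w_i=H_{N_\star}-H_{i-1}$, where $H_k=\sum_{j=1}^{k}1/j$ is the $k$-th harmonic number, are decreasing in $i$, which formalizes the claim that early predictions contribute more. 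Independence of the $X_{T_i}$ then makes the variance exact:
\[\mathrm{Var}(\bar{P}^{(T)}_\star)=\frac{P_\star(1-P_\star)}{N_\star^{2}}\sum_{i=1}^{N_\star}w_i^{2}.\]

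The crux, and the step I expect to be the main obstacle, is controlling $\sum_{i=1}^{N_\star}w_i^{2}$ so that the variance vanishes despite the logarithmic growth of the weights. Since $w_i\le H_{N_\star}\le 1+\ln N_\star$ for every $i$, I would bound $\sum_{i=1}^{N_\star}w_i^{2}\le N_\star(1+\ln N_\star)^2$, whence
\[\mathrm{Var}(\bar{P}^{(T)}_\star)\le \frac{P_\star(1-P_\star)(1+\ln N_\star)^2}{N_\star}\longrightarrow 0 \quad\text{as } N_\star\to\infty.\]
Under a stable concept up to $T$, the number of influencing observations $N_\star\to\infty$ as $T\to\infty$, so the hypothesis is met; combining zero bias with this variance bound yields $\mathrm{MSE}(\bar{P}^{(T)}_\star)\to 0$, i.e.\ MSE-consistency. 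A shorter but less quantitative alternative would note that $\bar{P}^{(T)}_\star$ is a Ces\`aro average of the strongly consistent $\hat{P}^{(T_n)}_\star$ and is bounded in $[0,1]$, so almost-sure convergence (SLLN plus Ces\`aro's lemma) upgrades to $L^2$ convergence by bounded convergence; I would keep the explicit harmonic-weight bound as the primary argument since it is self-contained and gives an explicit rate.
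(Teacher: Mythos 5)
Your proposal is correct and follows essentially the same route as the paper: both swap the order of summation to write $\bar{P}^{(T)}_\star$ as a weighted sum $\frac{1}{N_\star}\sum_{i=1}^{N_\star}\bigl(\sum_{j=i}^{N_\star}\frac{1}{j}\bigr)X_{T_i}$ of the independent Bernoulli indicators, verify unbiasedness, and kill the variance with the harmonic-number bound $\mathrm{Var}\lesssim \frac{(\log N_\star)^2}{N_\star}P_\star(1-P_\star)\to 0$ as $N_\star\to\infty$. If anything, your bound $w_i\le H_{N_\star}\le 1+\ln N_\star$ is slightly more careful than the paper's $H_{N_\star}\le \log N_\star$, which is not literally true but harmless asymptotically.
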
 
\begin{proof}
Among total $T$ time steps, suppose $\hat{P}^{(t)}_\star$ is changed at time step $T_1,\dots, T_{N_\star}$ where $T_1<T_2<\dots<T_{N_\star}\leq T$. Hence, 
\begin{equation*} \label{eq1}
\begin{split}
\bar{P}^{(T)}_{\star} = &\bar{P}^{(T_{N_\star})}_{\star}  
 = \dfrac{1}{N_\star}\sum_{i=1}^{N_\star}\hat{P}^{(T_i)}_\star\\
  = &\dfrac{1}{N_\star}[\sum_{n=1}^{N_\star}\dfrac{1}{n}\sum_{i=1}^{n}\mathbf{1}{\{y_{T_i}=\hat{y}_{T_i}\}}]\\
= & \dfrac{1}{N_\star}[\sum_{i=1}^{N_\star}\sum_{j=i}^{N_\star}\dfrac{1}{j}\mathbf{1}{\{y_{T_i}=\hat{y}_{T_i}\}}].
\end{split}
\end{equation*}
By IID assumption of indicators, we obtain
\begin{equation*}
\mathbb{E}(\bar{P}^{(T)})=\dfrac{1}{N_\star}[\sum_{i=1}^{N_\star}\sum_{j=i}^{N_\star}\dfrac{1}{j}P_\star]=P_\star
\end{equation*}
and 
\begin{equation*}
\begin{split}
\mathbb{VAR}(\bar{P}^{(T)})= &\dfrac{1}{N^2_\star}[\sum_{i=1}^{N_\star}\Big(\sum_{j=i}^{N_\star}\dfrac{1}{j}\Big)^2P_\star(1-P_\star)] \\
	\leq &\dfrac{(\sum_{j=1}^{N_\star}\dfrac{1}{j})^2}{N_\star}P_\star(1-P_\star) \\
	\leq &\dfrac{(log N_\star)^2}{N_\star} P_\star(1-P_\star)  \xrightarrow{T\to \infty} 0
\end{split}
\end{equation*}
where the last limit hold by the fact that $N_\star\to\infty$ as $T\to\infty$. Thus, as $\mathbb{E}(\bar{P}^{(T)}_{\star})-P_\star=0$ and $\mathbb{VAR}(\bar{P}^{(T)})\to0$, $\bar{P}^{(T)}$ is a MSE-consistent estimator.
\end{proof}

\subsection{Comparison between NFR and LFR}
\label{sec:NFRvsLFR}
To empirically compare the test statistics of NFR and LFR, we use Figure \ref{fig:LFRvsNFRtoy} to illustrate a single run of both LFR and NFR algorithm on the same synthetic streaming data $\{(y_t,\hat{y}_t)\}_{t=1}^{T}$. The data stream of pairs $\{(y_t,\hat{y}_t)\}_{t=1}^{T}$ with one change-point at $T/2$ is generated by sampling from two confusion probability matrices $CP^{(1)}$ and $CP^{(2)}$. The two concepts are characterized by $CP^{(1)}$ and $CP^{(2)}$ respectively. The type of drift is determined by particular settings of $(CP^{(1)}, CP^{(2)})$. In this example, to generate a balanced stream of pairs $\{(y_t,\hat{y}_t)\}_{t=1}^{T}$ representing the scenario that overall accuracy of classifier drops but $P_{tpr}$ remains constant, we chose 
\begin{small}
$CP^{(1)}= \left( \begin{array}{cc}
			0.4 & 0.1 \\
			0.1 & 0.4 \end{array} \right)$ 
and 
$CP^{(2)}= \left( \begin{array}{cc}
			0.3 & 0.1 \\
			0.2 & 0.4 \end{array} \right)$
\end{small}.
The objective of detection algorithms is to identify the change-point $T/2$.

It is clear that the test statistic $R^{(t)}_\star$ in LFR algorithm has a larger variance than $\hat{P}^{(t)}_\star$ for each rate. LFR algorithm reports an earlier detection at $t=5167$ (true detection point t=5000) when compared to NFR in this run, even though $\epsilon_\star^{(LFR)}<\epsilon_\star^{(NFR)}$. This observation matches well with the rationale of constructing $R^{(t)}_\star$, described in \S \ref{LFRoutline}, to gain detection sensitivity through introducing large variances. To rigorously compare detection performance of $R^{(t)}_\star$ and $\hat{P}^{(t)}_\star$, more investigations are provided below.

\begin{figure}[htbp]
	  	\centering
		\hbox{\hspace{0em}\vspace{-1em}
	 	 \includegraphics[scale=0.21]{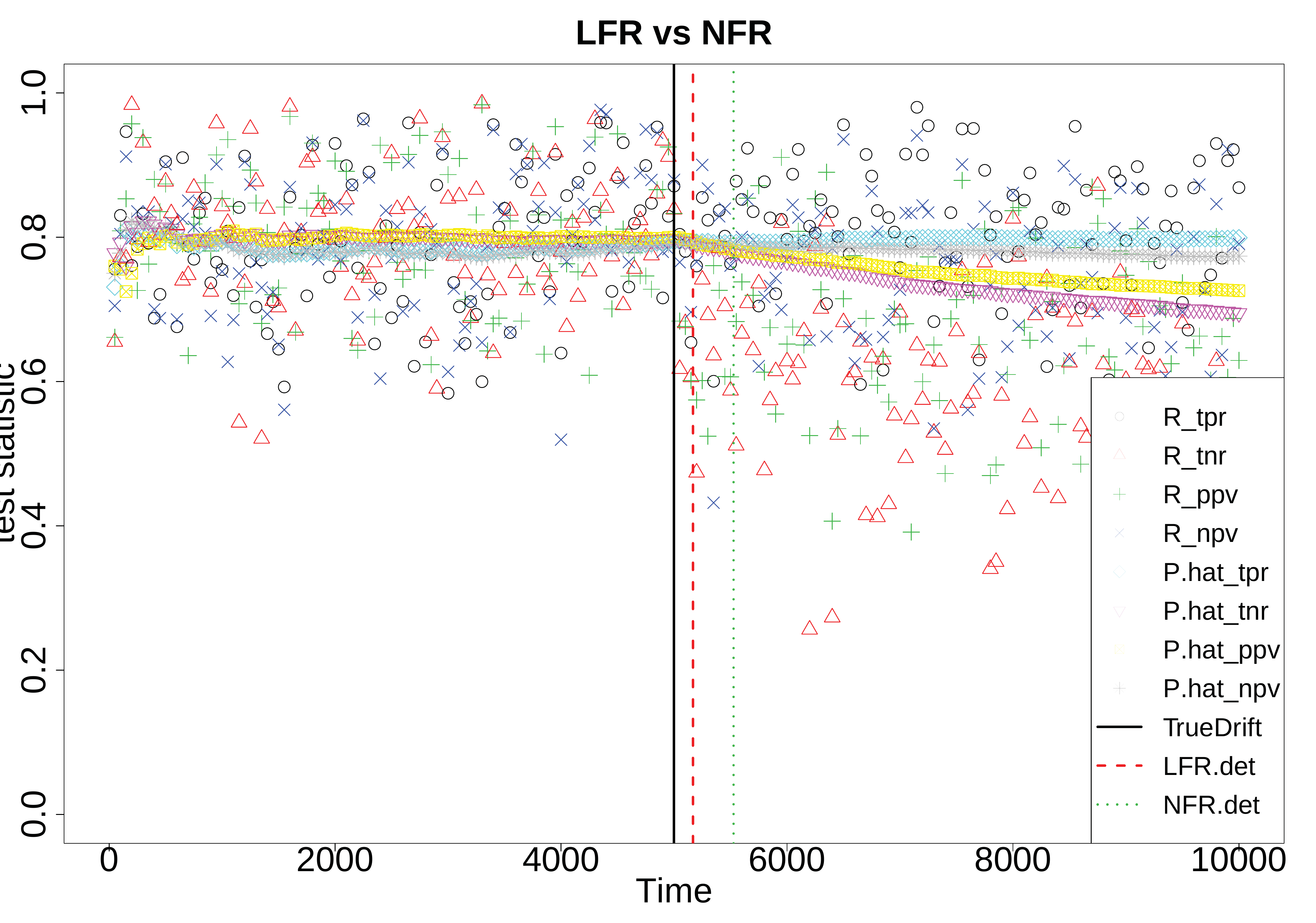}
	 }
	 	 \caption{A single run of LFR and NFR on the same synthetic streaming data. Black, red and green vertical lines are the 'true drift time', LFR detection and NFR detection time respectively. Four colored dots (black, red, green, blue) are running $R^{(t)_\star}$ and four colored horizontal 
lines (indigo, pink, yellow, grey) are running $\hat{P}^{(t)}_\star$ where $\star\in \{tpr,tnr,ppv,npv\}$.}
	 	 \label{fig:LFRvsNFRtoy}
\end{figure}

\label{LFRvsNFRPowerAnalysis}
Power characteristics of two competing test statistics $R^{(t)}_\star$ (LFR) and $\hat{P}^{(t)}_\star$ (NFR), are compared empirically on synthetic data. We denote by $\hat{\beta}_{R^{(t)}_\star}$ and  $\hat{\beta}_{\hat{P}^{(t)}_\star}$ the power estimates of $R^{(t)}_\star$ and $\hat{P}^{(t)}_\star$ respectively. The $\hat{\beta}_{R^{(t)}_\star}$ and  $\hat{\beta}_{\hat{P}^{(t)}_\star}$ against varying time lag $k$ and $q_\star$ are presented in Figure \ref{fig:PowerCompare}. Figure \ref{fig:PowerCompare} indicates that neither $R^{(t)}_\star$ nor $\hat{P}^{(t)}_\star$ dominates all the time because $R^{(t)}_\star$(red surface) achieves a larger statistical power when the time lag $K$ is small but a smaller power when $K$ is large. This is because the update rule line 7 enables the estimator $R^{(t)}_\star$ to shift from $p_\star$ to $q_\star$ at an exponential rate which leads the power dominance in a short lag. The price is that limiting distributions of $R^{(t)}_\star$ under both null and alternative have larger variances than $\hat{P}^{(t)}_\star$ and thus limiting power, when $K$ is large  and $|p_\star-q_\star|$ is small, is degraded.

\begin{figure}[htbp]
	  	\centering
		\hbox{\hspace{1em}\vspace{-1em}  
	 	 \includegraphics[scale=0.34]{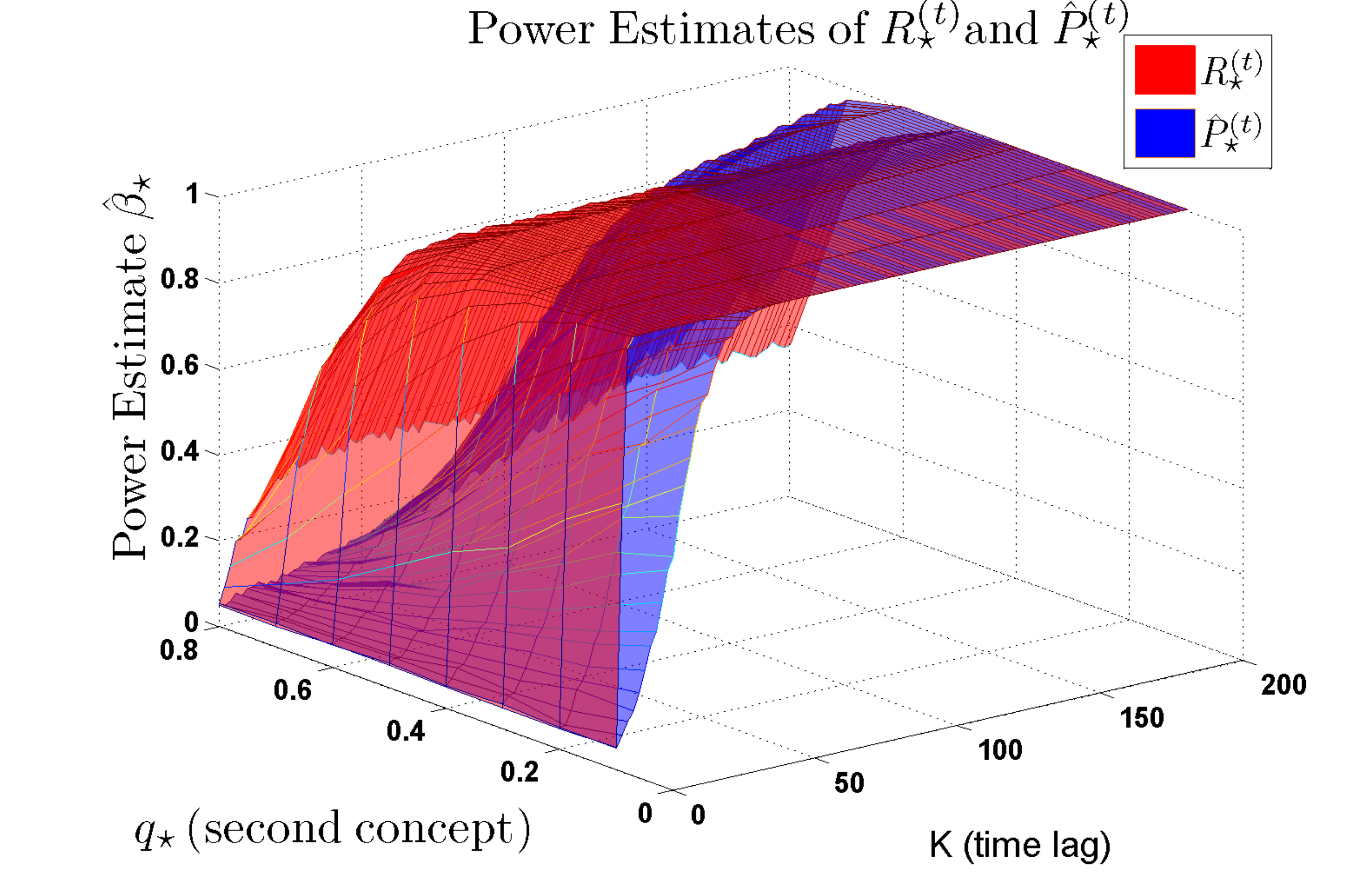}
	 }
	 	 \caption{Power comparison between $R^{(t)}_\star$ and $\hat{P}^{(t)}_\star$ where null distribution is at $t=M$ and alternative distribution is at $t=M+k$. $M=1000$, $1\leq k \leq K$ where $K=200$ is the maximal time lag. The underlying rate is drifted from $p_\star=0.9$ to $q_\star$ where $0.1\leq q_\star\leq 0.8$.}
	 	 \label{fig:PowerCompare}
\end{figure}

In order to compare sensitivities of $R^{(t)}_\star$ and $\hat{P}^{(t)}_\star$  with regard to detecting concept drift in more general settings, we used $\Delta\beta = \hat{\beta}_{R^{(t)}_\star}-\hat{\beta}_{\hat{P}^{(t)}_\star}$. The result is illustrated in Figure \ref{fig:PowerComparisonAll}. 
\begin{figure}[htbp]
	  	\centering
		\hbox{\hspace{1em} \vspace{-1em}
 		\includegraphics[scale=0.29]{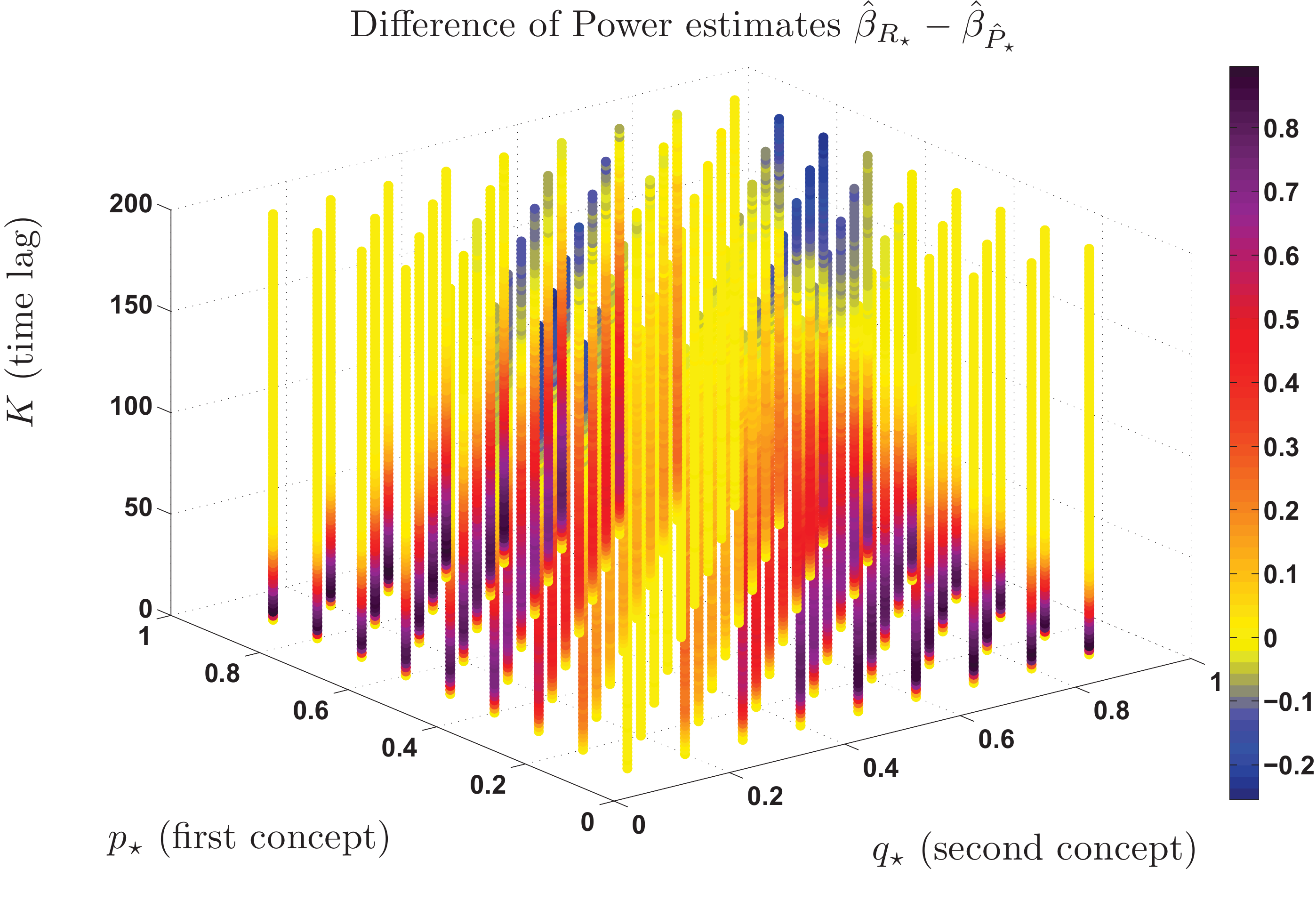}	
	 }
	 	 \caption{Power difference $\Delta\beta=\hat{\beta}_{R^{(t)}_\star}-\hat{\beta}_{\hat{P}^{(t)}_\star}$ along the time lag $K$ in different combinations of concept change from $p_\star$ to $q_\star$.}
	 	 \label{fig:PowerComparisonAll}
\end{figure}
Except when, $p_\star=q_\star$, we see that for any fixed pair of $(p_\star, q_\star)$, $\Delta \beta >0 $ when $K$ is small and $\Delta\beta\leq 0$ when $K$ is large. This is because $\Delta\beta$ decreases, as time lag $K$ increases. This suggests that LFR is preferable if earlier detection is highly desired. The alarms are more likely to be triggered in the earliest time after the occurrence of concept drift. Earlier detection allows observer to adjust the model and avoid costs of incorrect predictions immediately. On the other hand, if observers are only concerned with detecting the occurrence of drift in the data stream but unconcerned with its detection promptness, then NFR algorithm provides a higher power test statistic to detect the drift. This is because $\hat{P}^{(t)}_\star \to q_\star$ with convergence rate $O(\dfrac{1}{K})$. In the long run, as $K\to \infty$, $\hat{P}^{(t)}_\star \to q_\star$ implies that $\beta_{\hat{P}_\star^{(t)}} \to 1$. 

To guide the selection between LFR and NFR, Figure \ref{fig:Heatmap_LFR_K200} is a heatmap of limiting power estimates on all $(p_\star,q_\star)$ pairs using $K=200$. We can see that $\hat{\beta}_{R_\star}$ is already close to 1 for $K=200$, when $p_\star$ and $q_\star$ are significantly different.

\begin{figure}[htbp]
	  	\centering
		\hbox{\hspace{-2.5em} \vspace{-1em}
	 	 \includegraphics[scale=0.43]{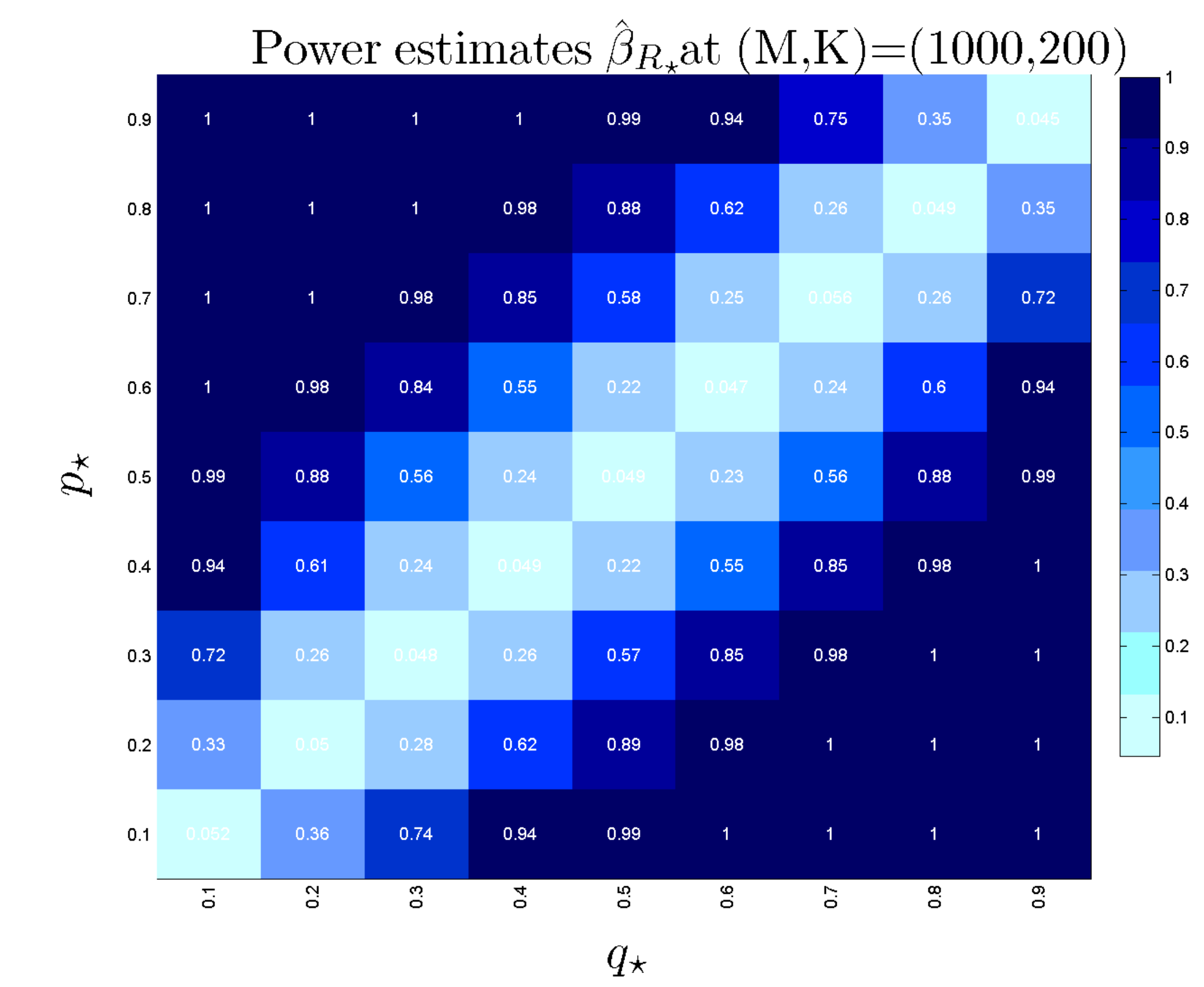}
	 }
	 	 \caption{Heatmap of Power estimates $\hat{\beta}_{R_\star}$} 
	 	 \label{fig:Heatmap_LFR_K200}
\end{figure}
\section{Experiments}
\label{sec:Experiments}
In this section, we compared the detection performance of LFR to NFR, DDM and DDM-OCI approaches using both synthetic data and public datasets. We considered $3$ simulated class-balance datasets, $3$ simulated class-imbalance datasets and $4$ public datasets to demonstrate LFR algorithm performs well across various types of concept drifts, including those where the baseline performs poorly. 

To generalize the performance and evaluate confidences of algorithms, we utilize the bootstrapping technique. For each synthetic dataset, we generate $100$ data streams of $\{(y_t, \hat{y}_t)\}_{t=1}^T$ rather than $\{(\mathbf{X}_t,y_t)\}_{t=1}^T$ so that comparison of detection algorithms is independent of classifiers employed; For each public dataset, the order of $(\mathbf{X}_t,y_t)$ pairs within each concept are permutated to create 100 bootstrapped dataset streams. Each stream is fed to all detection algorithms to obtain single-run detections for each method. To illustrate the accuracy of the prediction, we use overlapped histograms to visualize the distribution of detection points obtained from the concept drift detection models across the 100 runs. To avoid redundancy, we present $6$ histograms out of $10$ experiments and remaining ones are similar. As shown below, LFR consistently outperformed the baseline approaches. When compared to NFR, LFR correctly identifies more true drift points with higher probability and smaller number of false alarms even with a smaller $\epsilon_\star$.

\subsection{Synthetic Data}
\label{sec:exp-synthetic}
Numerous experiments were run on synthetic data, covering various types of concept drift. 
In each bootstrap, a data stream of pairs $\{(y_t,\hat{y}_t)\}_{t=1}^{T}$ with one change-point at $T/2$ is generated by using the same mechanism introduced in \S\ref{sec:NFRvsLFR}. The objective of detection algorithms is to identify the change-point $T/2$.
Six challenging and interesting scenarios are discussed below.

\subsubsection{Balanced Dataset}
\label{sec:exp-balance}
In balanced datasets, $P(y_t=0)=P(y_t=1)$ is required in underlying data generation. Class-balance data are the most typical scenario in classification task and hence investigated with following three representative experiments.
\begin{enumerate} [(i)]
\item Balance1: Overall accuracy of classifier drops but $P_{tpr}$ remains constant with
\begin{small}
$CP^{(1)}= \left( \begin{array}{cc}
			0.4 & 0.1 \\
			0.1 & 0.4 \end{array} \right)$ 
and 
$CP^{(2)}= \left( \begin{array}{cc}
			0.3 & 0.1 \\
			0.2 & 0.4 \end{array} \right)$
\end{small}.
\item Balance2: Gradual drift in which overall accuracy ($1-P_{error}$) remains the same with 
\begin{small}
$CP^{(1)}= \left( \begin{array}{cc}
				0.35 & 0.05 \\
				0.15 & 0.45 \end{array} \right),$ 
	$CP^{(2)}= \left( \begin{array}{cc}
				0.4 & 0.1 \\
				0.1 & 0.4 \end{array} \right)$ 
\end{small}. 
\item Balance3: Overall accuracy ($1-P_{error}$) increases and $P_{tpr}$ remains unchanged with 
\begin{small}
$CP^{(1)}= \left( \begin{array}{cc}
			0.3 & 0.2 \\
			0.2 & 0.3 \end{array} \right)$ 
,
$CP^{(2)}= \left( \begin{array}{cc}
			0.4 & 0.2 \\
			0.1 & 0.3 \end{array} \right)$
\end{small}.
\end{enumerate}
\begin{figure}
	  	\centering
		\hbox{\hspace{-0.5em}\vspace{-1em}
	 	 \includegraphics[scale=0.31]{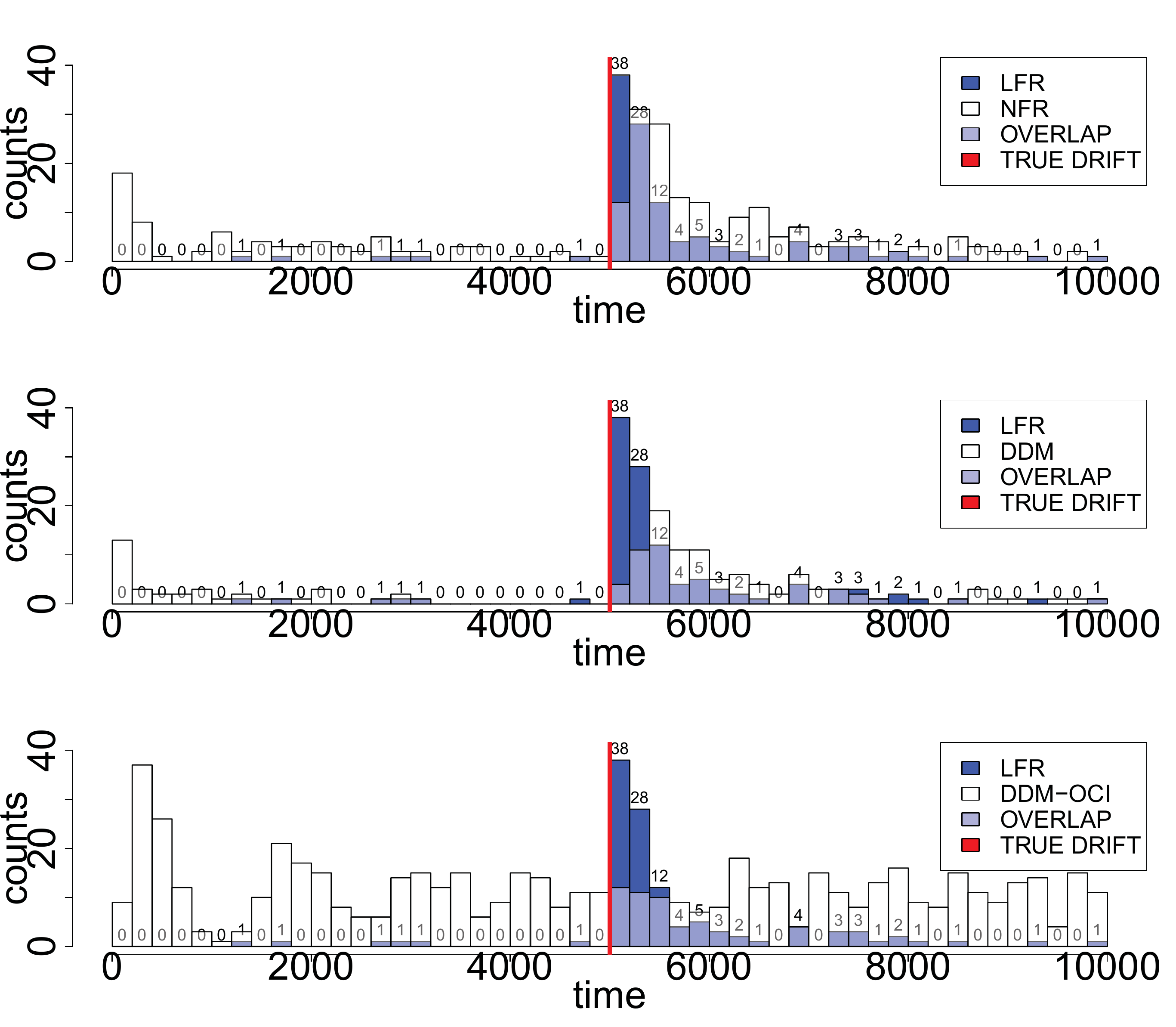}
	 }
	 	 \caption{Overlapping histograms comparing detection timestamps on Balance1 dataset in which overall accuracy of classifier drops but $P_{tpr}$ remains unchanged. Number of counts of LFR is above the top bar of each bin.}
	 	 \label{fig:balance1}
\end{figure}
\begin{figure}
	  	\centering
		\hbox{\hspace{1em}\vspace{-1em}
	 	 \includegraphics[scale=0.3]{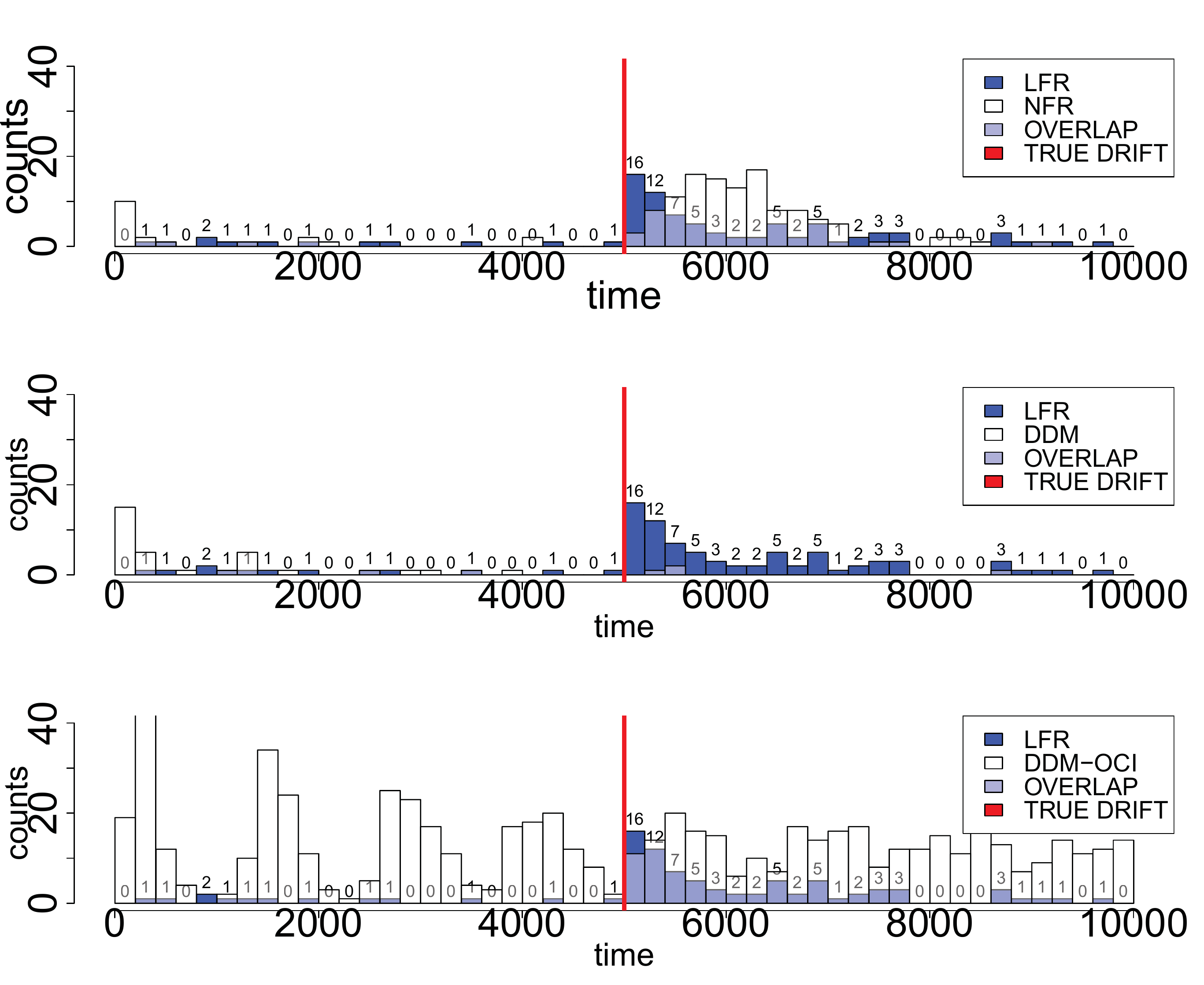}
	 }
	 	 \caption{Overlapping histograms comparing detection timestamps on Balance2 dataset in which gradual drift occurs but overall accuracy remains the same. Number of counts of LFR is above the top bar of each bin.}
	 	 \label{fig:balance2}
\end{figure}



\subsubsection{Imbalanced Dataset}
\label{sec:exp-imbalance}
\begin{figure}
	  	\centering
		\hbox{\hspace{1em}\vspace{-1em}
	 	 \includegraphics[scale=0.3]{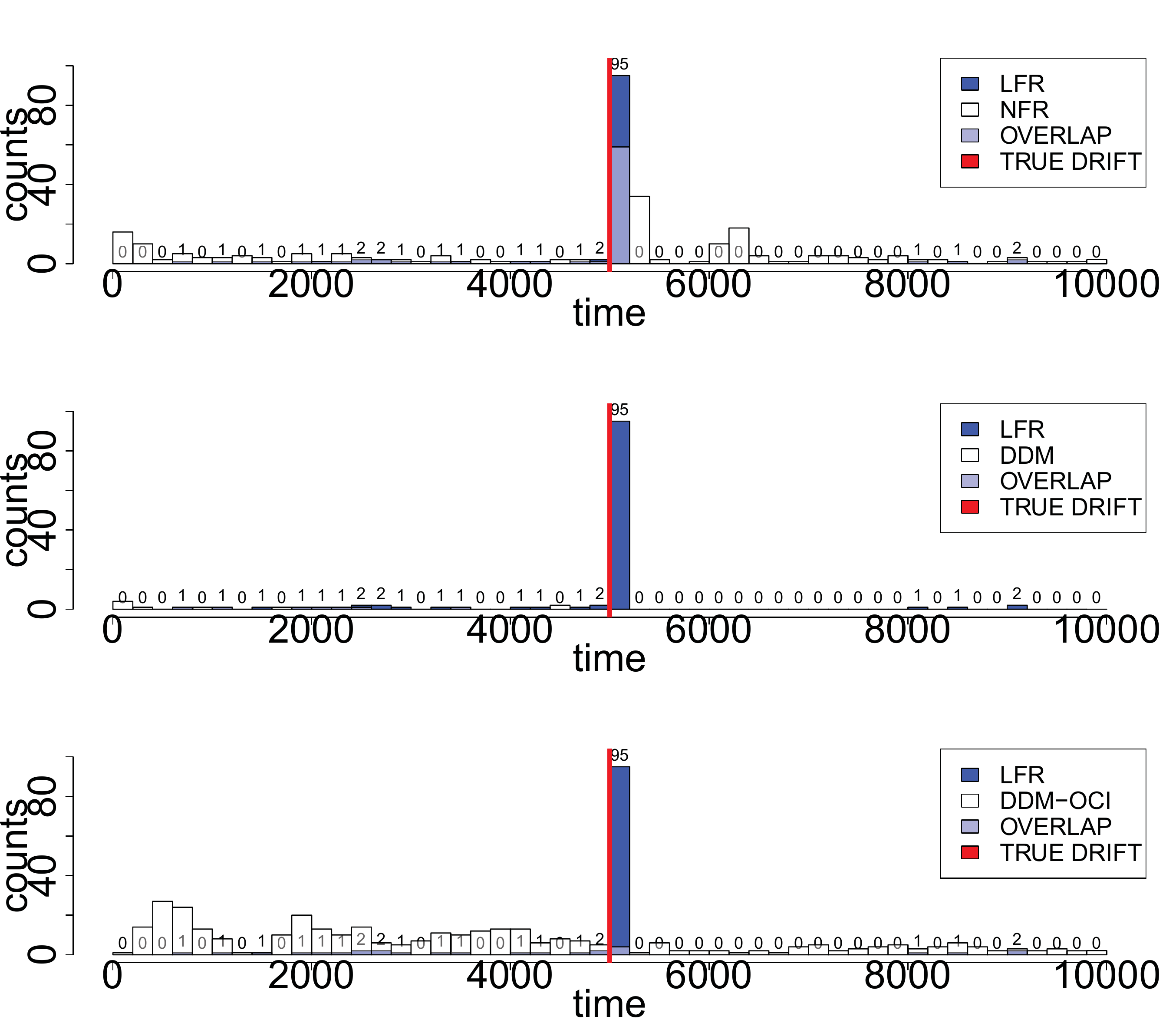}
	 }
	 	 \caption{Overlapping histograms comparing detection timestamps on Imbalance1 dataset in which class ratio transits from 1:1 to 9:1 but $F1-$score remains unchanged. Number of counts of LFR is above the top bar of each bin.}
	 	 \label{fig:imbalance1}
\end{figure}
For imbalanced datasets, we used the same data generation mechanism as balanced case but make $P(y_t=0)$ and $P(y_t=1)$ imbalanced. We considered the following three interesting types of concept drifts given many attentions to in real applications.
\begin{enumerate} [(i)]
\item Imbalance1: From class balance dataset to class imbalance dataset with
\begin{small}
$CP^{(1)}= \left( \begin{array}{cc}
			1/3 & 1/6 \\
			1/6 & 1/3 \end{array} \right)$ 
and
$CP^{(2)}= \left( \begin{array}{cc}
			13/15 & 1/30 \\
			1/30 & 1/15 \end{array} \right)$
\end{small}. 
Without loss of generality, let $y=1$ be the minority class. It is also noteworthy that $P_{tpr}$ and $P_{ppv}$ are unchanged after drift occurrence. Hence, many detectors in imbalance data learning society, using F1 score as a measure to monitor classifier performance, is unable to alarm this type of drift. However, \figurename~\ref{fig:imbalance1} shows that LFR performs very well by dominating both high early detection rate and trivial false alarms. Besides, DDM and DDM-OCI has no detection after change-point due to the increment of ($1-P_{error}$) and $P_{tpr}$, respectively.

\item Imbalance2: The class ratio and $P_{error}$ remain unchanged but $P_{tpr}$ decreases with
\begin{small}
$CP^{(1)}= \left( \begin{array}{cc}
			0.65 & 0.05 \\
			0.15 & 0.15 \end{array} \right)$ 
and
$CP^{(2)}= \left( \begin{array}{cc}
			0.75 & 0.15 \\
			0.05 & 0.05 \end{array} \right)$
\end{small}. 


\item Imbalance3: All $P_{tpr}$,$P_{ppv}$ and $1-P_{error}$ decreases. Though class ratio remains the same, both F1-score and overall accuracy decreases.
 Two conditional probability matrices are selected as 
\begin{small}
$CP^{(1)}= \left( \begin{array}{cc}
			0.6 & 0.15 \\
			0.15 & 0.1 \end{array} \right)$ and 
$CP^{(2)}= \left( \begin{array}{cc}
			0.6 & 0.15 \\
			0.15 & 0.1 \end{array} \right)$
\end{small}. 			
\end{enumerate}
\subsection{Public Datasets}
All detection algorithms are evaluated on four public datasets used in literature. Without loss of generality, we chose the Support Vector Machine (SVM)\cite{meyer2014support} with an RBF Kernel as the classifier $\hat{f}$, because all detection algorithms are independent of type of classifiers. Misclassification of the minority class is penalized 100 times more than the majority class. If a potential concept drift is reported by the algorithm, examples from the new concept will be stored to retrain a new SVM classifier $\hat{f}_{new}$, adapted with new concept. Specifically, $1000$ examples are used for retraining on SEA and Rotating Hyperplane datasets; $100$ examples are used for retraining on USENET1 and USENET2 datasets.

\label{sec:exp-public}
\subsubsection{Datasets}
\label{sec:exp-public-dataset}
\begin{small}
\begin{table}
\centering
    \begin{tabular}{| c | c | c | c |}
    \hline
    Dataset & $T$ & True Drift Time & dimensions ($d$)  \\ \hline
    SEA & $60000$ & $\{15000\times i\}_{i=1}^{3}$ & $3$ \\ \hline
    HYPER. & $ 90000$ & $\{10000\times i\}_{i=1}^{8}$ & $10$\\ \hline
    USENET1 & $1500$ & $\{300\times i\}_{i=1}^{5}$ &100 \\ \hline
    USENET2 & $1500$ & $\{300\times i\}_{i=1}^{5}$ &100 \\ \hline
    \end{tabular}
\caption{Key features of datasets.}
\label{tab:public-dataset}
\end{table}
\end{small}
SEA Concepts dataset is used in \cite{street2001streaming}. The dataset is available at \url{http://www.liaad.up.pt/kdus/products/datasets-for-concept-drift}, and is widely used as a testbed by concept drift detection algorithms. Rotating Hyperplane dataset is created by \cite{fan2004systematic}.The dataset and specific $(k,t)$ pairs of each concept are available at \url{http://www.win.tue.nl/~mpechen/data/DriftSets/}. USENET1 and USENET2 datasets, used in \cite{katakis2008ensemble}, are available at \url{http://mlkd.csd.auth.gr/concept_drift.html}. They are stream collections of messages from different newsgroups (e.g. medicine, space, baseball) to a user. The difference between USENET1 and USENET2 is the magnitude of drift. The user in USENET1 has a sharper topic shift than the one in USENET2. 
\begin{figure}
		\hbox{\hspace{0em}\vspace{-1em}
	 	 \includegraphics[scale=0.33]{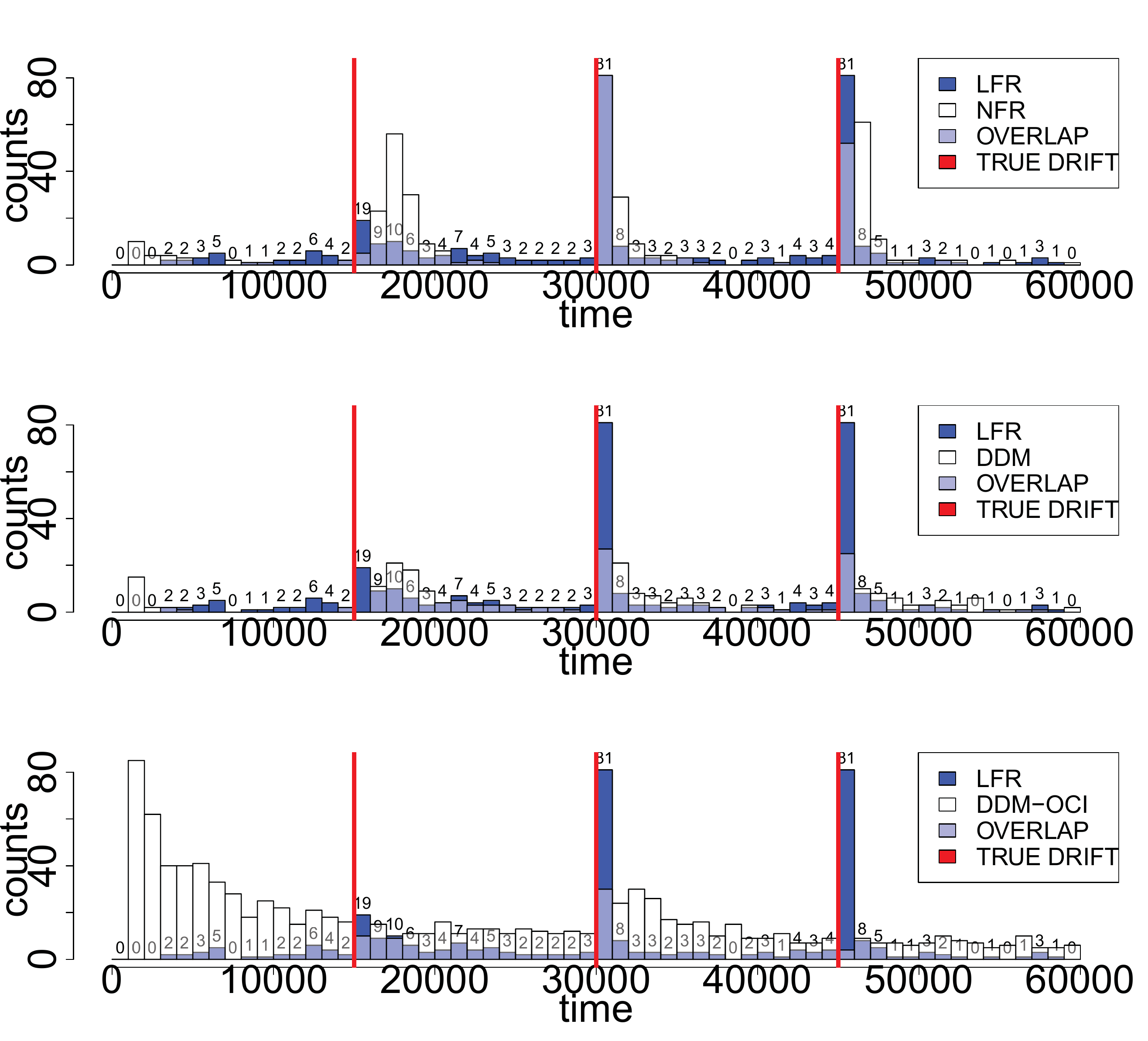}
	 }
	 	 \caption{Overlapping histograms comparing detection timestamps on SEA.}
	 	 \label{fig:SEA}
\end{figure}

\begin{figure}
		\hbox{\hspace{-1em}\vspace{-1em}
	 	 \includegraphics[scale=0.25]{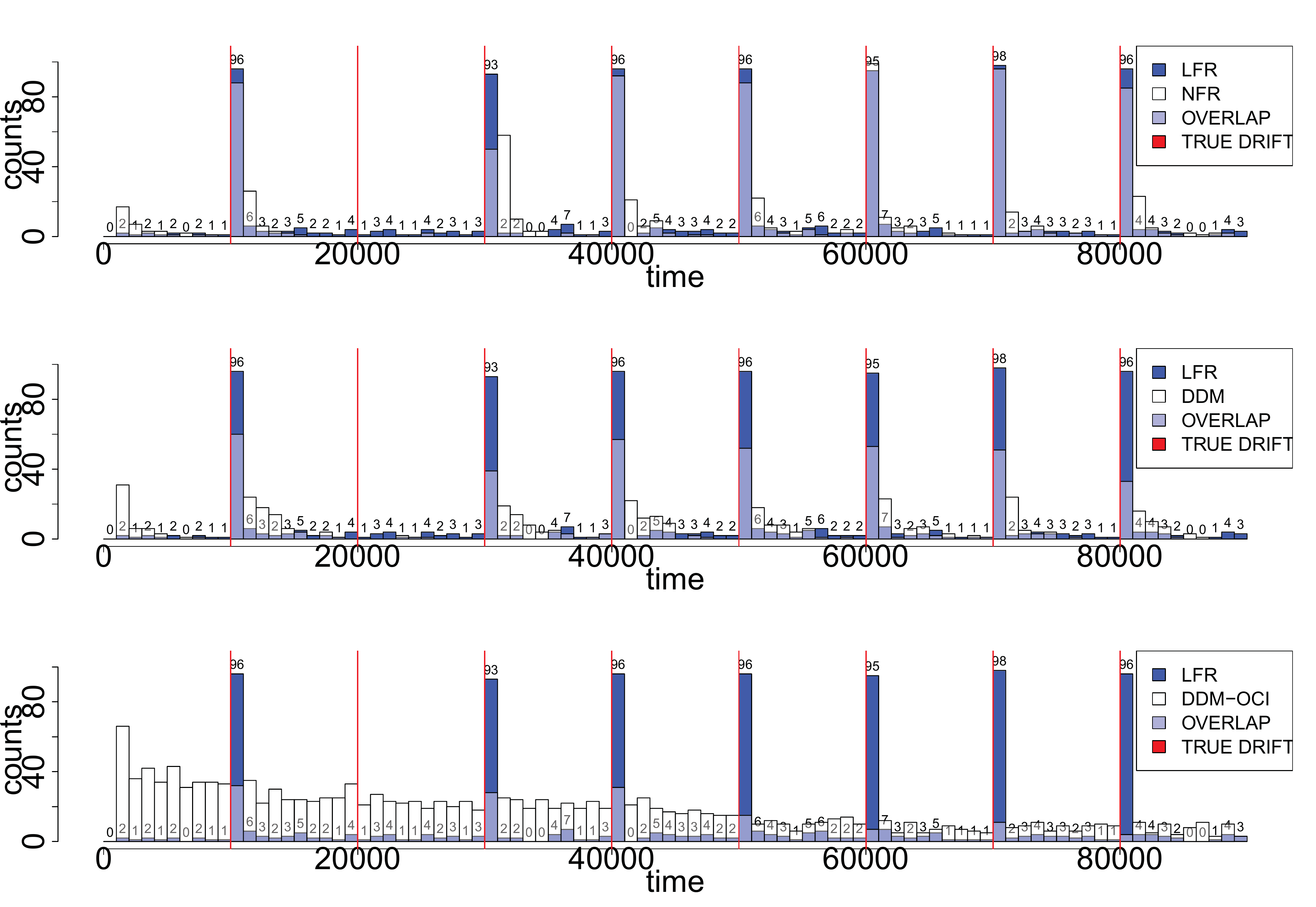}
	 }
	 	 \caption{Overlapping histograms comparing detection timestamps on HYPERPLANE.}
	 	 \label{fig:HYPERPLANE}
\end{figure}

\begin{figure}
		\hbox{\hspace{0em}\vspace{-1em}
	 	 \includegraphics[scale=0.33]{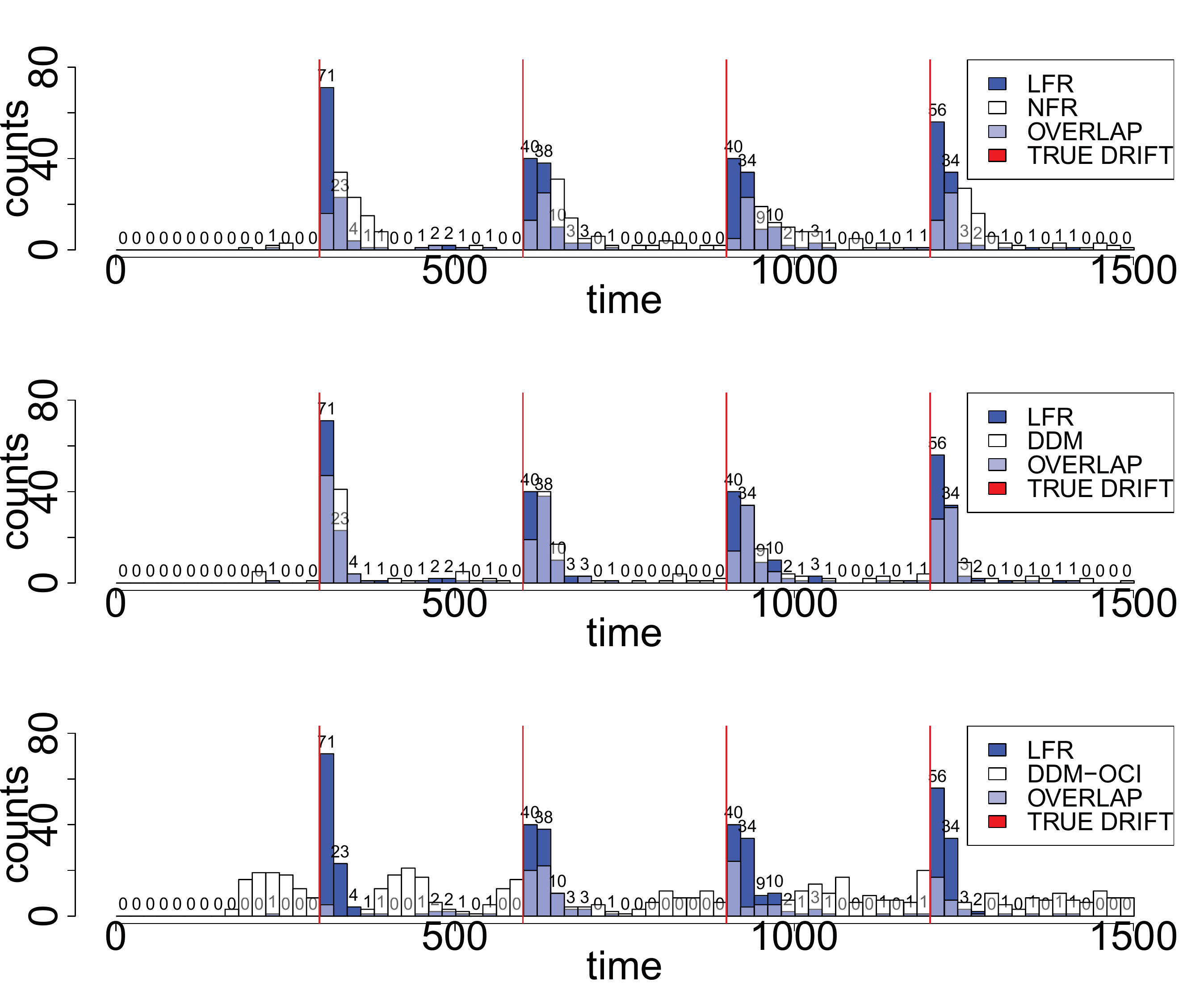}
	 }
	 	 \caption{Overlapping histograms comparing detection timestamps on USENET1.}
	 	 \label{fig:USENET1}
\end{figure}
All above datasets are in form of $\{\mathbf{X}_t,y_t\}_{t=1}^{T}$ and their key features are summarized in Table \ref{tab:public-dataset}. Other details such as the imbalance status and type of drift of each dataset are available through above links.
\begin{small}
\begin{table}
\centering
    \begin{tabular}{| c | c | c | c |c |}
    \hline
    Metric & LFR & NFR & DDM & DDM-OCI \\ \hline
    Balance1 & 38 &  12 & 4 & 12 \\ \hline
    Balance2 & 16 & 3 & 0 & 11 \\ \hline
    Balance3 & 25 & 4 & 0 & 3  \\ \hline
    Imbalance1 & 95 & 59 & 0 & 4 \\\hline
    Imbalance2 & 91 & 21 & 0 & 43 \\\hline
    Imbalance3 & 95 & 38 & 36 & 39 \\\hline
    SEA  & 142 & 29 & 17 & 26 \\\hline 
    HYPRPLN & 671 & 598 & 345 & 149 \\\hline
    USENET1 & 207 & 47 & 108 & 66 \\\hline
    USENET2 & 3 & 17 & 3 & 21 \\\hline
    \end{tabular}
\caption{The count (sum) at (multiple) true drift point correctly detected for simulated (public) datasets.}
\label{tab:counts of true drift point}
\end{table}
\end{small}
\begin{small}
\begin{table}
\centering
    \begin{tabular}{|l | l |l | l | l |}
    \hline
    Metric & LFR & NFR & DDM & DDM-OCI\\ \hline
    Balance1 & 6 & 77 & 36 & 304\\ \hline
    Balance2 & 13 & 19 & 33 & 339\\ \hline
    Balance3 & 18 &  54 & 11 & 219\\ \hline
    Imbalance1 & 18 & 81& 16 & 259\\\hline
    Imbalance2 & 10 & 91 & 23 & 165\\\hline
    Imbalance3 & 9 & 86 & 55 & 204\\\hline
    SEA  & 72 & 32 & 54 & 658\\\hline 
    HYPRPLN & 84 & 56 & 73 & 826\\\hline
    USENET1 & 12 & 50 & 43 & 322\\\hline
    USENET2 & 43 & 80 & 65 & 272\\\hline
    \end{tabular}
\caption{The count (sum) at (multiple) false detection for the simulated (public) datasets}
\label{tab:counts of false detection}
\end{table}
\end{small}
\subsubsection{Evaluation}
\label{sec:exp-public-evaluation}
In SEA Concepts Dataset experiment, \figurename~\ref{fig:SEA} shows that LFR dominates other three approaches in terms of early detections and fewer false or delayed detections. 

\figurename~\ref{fig:HYPERPLANE} shows that LFR has a dominant performance on the Rotation Hyperplane Dataset experiment. At the second true drift time point, the underlying concept change is very minor. Hence the drift is neglected by all detection algorithms.

In USENET1 dataset experiment, \figurename~\ref{fig:USENET1} indicates LFR dominates other approaches and all drift points are alarmed.  Similarly, in USENET2 dataset experiment, LFR also outperforms other approaches 
but detections are delayed with longer time lag. The decrement of superiority of LFR, from USENET1 to USENET2 is due to decrements of magnitude of concept drifts.

\subsection{summary statistics}
\begin{table}
\centering
    \begin{tabular}{| l | l | l | l | }
    \hline
    Parameters & Detect Sig. & Warn Sig. &  Decay\\ \hline
    LFR & $\epsilon_\star=1/100K$ & $\delta_\star=1/100$ & $\eta_\star=0.9$\\ \hline
    NFR & $\epsilon_\star=1/1K$ & $\delta_\star=0.025$ & $\eta_\star=0.9$\\ \hline
    DDM & $\alpha_{detect}=3$ & $\alpha_{warn}=2$ & $\eta_\star=0.9$ \\ \hline
    DDM-OCI & $\alpha_{detect}=20$ & $\alpha_{warn}=10$ & $\eta_\star=0.9$ \\\hline
    \end{tabular}
\caption{Parameter settings used in \S \ref{sec:exp-synthetic} experiments}
\label{tab:parameter-set}
\end{table}

\begin{table}
 	\centering
    \begin{tabular}{| p{0.8cm} | p{1.8cm} | p{1.8cm} | p{1.8cm} | }
    \hline
    Para. & SEA & HYPRPLN. &  USENET1\&2 \\ \hline
    	LFR & $\epsilon_\star=1/10K$ $\delta_\star=1/100$ & $\epsilon_\star=1/10K$ $\delta_\star=1/100$ & $\epsilon_\star=1/10K$ $\delta_\star=1/100$\\ \hline
 	NFR & $\epsilon_\star=1/1K$ $\delta_\star=0.025$ &  $\epsilon_\star=1/1K$ $\delta_\star=0.025$  & $\epsilon_\star=1/1K$ $\delta_\star=0.025$\\ \hline
 	DDM & $\alpha_{detect}=3$ $\alpha_{warn}=2$  &  $\alpha_{detect}=3$ $\alpha_{warn}=2$ & $\alpha_{detect}=3$ $\alpha_{warn}=2$\\ \hline
 	DDM-OCI & $\alpha_{detect}=20$ $\alpha_{warn}=10$ & $\alpha_{detect}=30$ $\alpha_{warn}=10$ &$\alpha_{detect}=3$ $\alpha_{warn}=2$\\ \hline
    \end{tabular}
\caption{Parameter settings used in \S \ref{sec:exp-public} experiments}
\label{tab:public-parameter-set}
\end{table}

In general, the best algorithm will have the minimal number of false alarms and maximal number of early detections, whereas poor algorithms give large number of false alarms, missing or severely delayed true detections. A summary of the counts of correct detections at true drift timestamp and counts of false detections during false detection period for the simulated and public datasets are provided in Tables \ref{tab:counts of true drift point} and Table \ref{tab:counts of false detection}. 

False detection period refers to the period preceding the data points that belong to the new concept. For the synthetically generated datasets in \S \ref{sec:exp-synthetic}, there were two concepts spanning the $T$ data points, such that the false detection period is defined as $[0,T/2)$. For the datasets specified in \S \ref{sec:exp-public}, if there were more than two concepts, the false detection period corresponds to the range from the concept midway up to the next true drift point. Each bin in the histograms correspond to $200$ time steps in \S \ref{sec:exp-synthetic} and dataset-dependent in \S \ref{sec:exp-public}. 
Since it has been observed in \cite{alippi2013just, minku2012ddd} that false alarms may have a smaller influence on predictive performance than late drift detections, the true detection period in our experiments refers to the period spanning next $200$ time steps ($1$ bin) after $T/2$ in \S \ref{sec:exp-synthetic} and the period spanning $1$ bin after each true drfit point in \S \ref{sec:exp-public}.
Other parameter settings of detection algorithms are summarized in Table \ref{tab:parameter-set} and \ref{tab:public-parameter-set}. They are particularly selected to show the dominating performance of LFR, i.e. the smallest allowable type-I error but the largest statistical power, over benchmark algorithms.

As sumarized in Table \ref{tab:counts of true drift point}, LFR fared best in terms of recall of true change point detecion across the various datasets. Equally importantly, LFR had the the highest precision with regard to detecting change points by producing the least amount of false detection and delayed detection (Table \ref{tab:counts of false detection}).

\section{Conclusion}
The paper presents a concept drift detection framework (LFR) for detecting the occurance of a concept drift and identifies the data points that belong to the new concept. The versitality of LFR allows it to work with both batch and stream datasets, imbalanced data sets and it uses user-specified parameters that are intuitively comprehensible, unlike other popular concept drift detection approaches. LFR significantly outperforms existing benchmark approaches in terms of early detection of concept drifts, high detection rate and low false alarm rate across the types of concept drifts.
\label{sec:conclusion}	


%
\bibliographystyle{IEEEtran}
\bibliography{WA}

\end{document}